\title{No Internal Regret via Neighborhood Watch}
\author{Dean Foster \\ Department of Statistics \\ University of Pennsylvania \and Alexander Rakhlin\\ Department of Statistics \\ University of Pennsylvania}
\newcommand{\A}{{\mathcal{A}}}
\newcommand{\E}{{\mathbb{E}}}
\newcommand{\C}{{\mathcal{C}}}
\newcommand{\F}{{\mathcal{F}}}
\newcommand{\G}{{\mathcal{G}}}
\renewcommand{\P}{{\mathbb{P}}}
\newcommand{\reals}{\ensuremath{\mathbb R}}
\newcommand{\tr}{\ensuremath{{\scriptscriptstyle\mathsf{T}}}}
\newcommand{\ind}[1]{{\bf I}\left\{#1\right\}}
\newcommand{\bone}{{\mathrm 1}}
\def\deq{\triangleq}
\newcommand{\algoname}{{\sf Neighborhood Watch}}
\newtheorem{theorem}{Theorem}[section]
\newtheorem{definition}{Definition}[section]
\newtheorem{lemma}{Lemma}[section]
\newtheorem{corollary}{Corollary}[section]
\begin{document}

\maketitle

\begin{abstract}
	We present an algorithm which attains $O(\sqrt{T})$ internal (and thus external) regret for finite games with partial monitoring under the \emph{local observability condition}. Recently, this condition has been shown by Bart\'ok, P\'al, and Szepesv\'ari \cite{BarPalSze11} to imply the $O(\sqrt{T})$ rate for partial monitoring games against an i.i.d. opponent, and the authors conjectured that the same holds for non-stochastic adversaries. Our result is in the affirmative, and it completes the characterization of possible rates for finite partial-monitoring games, an open question stated by Cesa-Bianchi, Lugosi, and Stoltz \cite{CesLugSto06}. Our regret guarantees also hold for the more general model of partial monitoring with random signals.
\end{abstract}

\section{Introduction}

Imagine playing a repeated zero-sum game against an opponent (column player) where the loss is defined by a given matrix $L\in \reals^{N\times M}$. Unlike the classical full-information scenario, however, we (the row player) do not observe the moves of the opponent and instead receive some signal given by the known matrix $H\in \Sigma^{N\times M}$ defined over some alphabet $\Sigma$. Specifically, for the choices $i$ and $j$ of the row and column players, the row player observes the signal $H_{i,j}$. Neither the move of the opponent nor the incurred loss $L_{i,j}$ is observed by the row player. In this paper, we are concerned with rates for external and internal regret achievable in this scenario.

The question of characterizing such rates in terms of the matrices $L$ and $H$ has been raised by Cesa-Bianchi, Lugosi, and Stoltz \cite{CesLugSto06}. Under a linear dependence between the matrices $L$ and $H$, the authors proved $O(T^{2/3})$ rates for external regret, yet noted that there exist games with the $\Theta(\sqrt{T})$ behavior (e.g. the so-called \emph{bandit feedback} games where $L=H$). Similar distinction in available rates also appears to hold for internal regret: an $O(T^{2/3})$ upper bound was shown in \cite{CesLugSto06}, while the rate of  $O(\sqrt{T})$ is achievable for bandit feedback by the result of Blum and Mansour \cite{BluMan07}. 

Recently, Bart\'ok, P\'al, and Szepesv\'ari in \cite{BarPalSze10,BarPalSze11} made key insights into the problem of partial monitoring. In particular, \cite{BarPalSze11} characterized the rates for external regret against an i.i.d. (stochastic) opponent. The authors showed that rates can only be one of $\Theta(1), \Theta(\sqrt{T}), \Theta(T^{2/3})$ and $\Theta(T)$, and that a so-called {\em local observability condition} plays a key role in determining this growth behavior. In the non-stochastic (adversarial) case, however, no general characterization is available to date, with the notable exception of games with two adversarial actions \cite{BarPalSze10}. As suggested by \cite{BarPalSze11}, to provide a complete characterization for external regret against non-stochastic opponents, it would be enough to show an upper bound of $O(\sqrt{T})$ under the \emph{local observability condition}. The characterization would follow because \cite{BarPalSze11} proves a $\Omega(T^{2/3})$ lower bound when local observability does not hold (yet the game is not hopeless with $\Omega(T)$ regret) and the upper bound of $O(T^{2/3})$ is achieved by the algorithm of Piccolboni and Schindelhauer \cite{PicSch01} through the analysis of \cite{CesLugSto06}. 

This paper presents an algorithm, \algoname, with an upper bound of $O(\sqrt{T})$ for both internal and external regret against a non-stochastic opponent under the local observability condition. Together with the results mentioned above, this completes the characterization for both internal and external regret. It is remarkable that the condition of local observability that characterizes games against a stochastic environment also characterizes games against non-stochastic opponents.

We now summarize our approach. First, we define a notion of \emph{local} internal regret which postulates that the player does not benefit by switching any of its actions to a neighboring action. The neighbor relation is defined by the neighborhood graph of best responses to mixed strategies of the opponent. Second, we show that small \emph{local} internal regret implies small (global) internal regret. We then present an algorithm which randomly chooses a neighborhood and then chooses an action in the neighborhood. A key property satisfied by the two-level procedure is a certain flow condition. Under this condition, external regret of sub-algorithms on local neighborhoods can be turned into a statement about local internal regret (and, hence, global internal regret). External regret of the sub-algorithms, in turn, can be upper bounded because local observability condition allows us to estimate relative losses of neighboring actions.

\section{Notation and definitions}

We follow the notation of \cite{BarPalSze11}. Let $\ell_i$ denote the $i$th row of $L$. Without loss of generality, assume that each row of $H$ contains unique sets of symbols. Let $\sigma_1,\ldots,\sigma_{s_i}$ be the list of symbols in the $i$th row of $H$. The signal matrix $S_i\in\{0,1\}^{s_i\times M}$ is defined by $S_i (k,j) = \ind{H_{i,j}=\sigma_k}$ where $\ind{}$ is the indicator function. For a pair $i,k$ of actions define $S_{(i,k)}\in\{0,1\}^{(s_i+s_k)\times M}$ by stacking $S_i$ on top of $S_k$. Note that, upon playing action $i$, the signal $H_{i,j}$ arising from the unobserved action $j$ is equivalent to the feedback $S_i e_j$. 

Let $\C = \{C_1,\ldots,C_N\}$ be a partition of the simplex $\Delta_M$ according to the best response (action) of the player to the mixed strategy of the adversary:
	$$ C_i = \{q\in \Delta_M: i \mbox{ is best response for } q \} .$$
We assume that no action is completely dominated by others; that is, each $C_i$ is non-empty. Further, for simplicity we assume that $\C$ is indeed a partition and there are no degeneracies (we can modify the argument by defining  neighborhood action sets as in \cite{BarPalSze11}). \emph{Neighboring actions} are naturally defined as those that share a boundary in the partition. Let $\G$ be the graph obtained by connecting the neighboring cells of the partition $\C$. The vertex set of $\G$ is precisely the set $\{1,\ldots,N\}$ of player's actions. For each action $i$, let the set of its neighbors $N_i$ be called the {\it neighbor set}. By convention, any vertex is its own neighbor: $i\in N_i$. We will often use the terms \emph{action} and \emph{vertex} interchangeably, thanks to the one-to-one correspondence.

\begin{definition}[Bart\'ok, P\'al, Szepesv\'ari \cite{BarPalSze11}]
	The game is called locally observable if $\ell_i-\ell_j \in \text{Im}~S^\tr_{(i,j)}$ for all neighboring actions $i,j$.
\end{definition}

Under the local observability condition, for each pair of local actions $i,j$ there exists a vector $v_{(i,j)}$ such that $\ell_j-\ell_i = S_{(i,j)}^\tr v_{(i,j)}$. Since $L$ and $H$ are known, we can compute vectors $v_{(i,j)}$ and use them to construct unbiased estimates of true loss differences. 

\paragraph{Notation} Let $[N]$ denote the set $\{1,\ldots,N\}$. For a subset $S\subset [N]$ we use $\bone_{S} \in \{0,1\}^N$ to denote the vector with ones on the coordinates in $S$ and zeros outside. A vector $a\in \reals^N$ indexed by $j$ is sometimes denoted by $[a_j]_{j\in [N]}$. The scalar product between two vectors $a$ and $b$ will be variously written as $a^\tr b$ or $a\cdot b$. Standard basis vectors are denoted by $\{e_i\}$.

\section{Internal Regret in the Neighborhood}

Let $\phi:\{1,\ldots,N\}\mapsto \{1,\ldots,N\}$ be a \emph{departure function} \cite{CesLugSto06}, and let $i_t$ and $j_t$ denote the moves at time $t$ of the player and the opponent, respectively. At the end of the game, regret with respect to $\phi$ is calculated as the difference of the incurred cumulative cost and the cost that would have been incurred had we played action $\phi(i_t)$ instead of $i_t$, for all $t$. Let $\Phi$ be a set of departure functions. $\Phi$-regret is defined as 
$$ \frac{1}{T}\sum_{t=1}^T c(i_t, j_t) -  \inf_{\phi\in\Phi}\frac{1}{T}\sum_{t=1}^T c(\phi(i_t), j_t)$$
where the cost function considered in this paper is simply $c(i,j) = e_i^\tr L e_j$. If $\Phi=\{\phi_k: k\in[N]\}$ consists of constant mappings $\phi_k(i)=k$, the regret is called \emph{external}. For (global) internal regret, the set $\Phi$ consists of all  departure functions $\phi_{i\to j}$ such that $\phi_{i\to j} (i)=j$ and $\phi_{i\to j} (h)=h$ for $h\neq i$.

\begin{definition}
	\label{def:local_departure}
	A departure function $\phi_{i\to j}$ is called \emph{local departure function} if $j$ is a neighbor of $i$ in the neighborhood graph $\G$. Regret defined with respect to the set of all local departure functions is called \emph{local internal regret}.
\end{definition}

Under the local observability condition, we can estimate the differences in performance between the action and its neighbors in a way similar to non-stochastic bandit methods. We can, therefore, ensure that any time we chose an action, its loss was not much more than that of any of its neighbors. That is, local observability condition leads to an algorithm with no external regret and, under the flow condition detailed later, no local internal regret. A key observation is that no local internal regret implies no global internal regret. Intuitively, this stems from the fact that the second-best-response action must be a neighbor of the best-response action. Hence, ensuring small internal regret against the neighbors is enough to guarantee small internal regret.

\begin{figure}[htbp]
	\centering
		\includegraphics[height=1.8in]{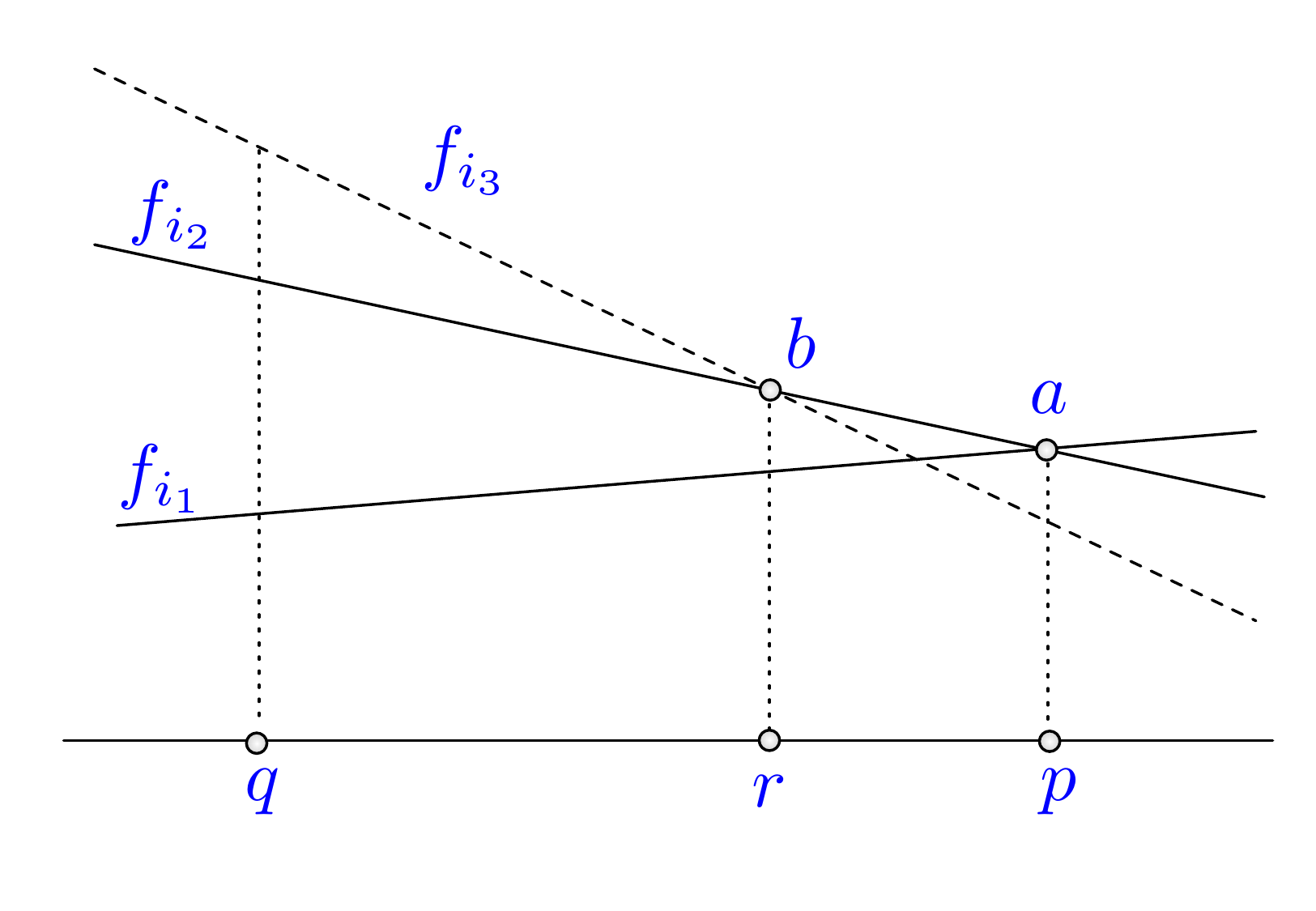}
	\label{fig:neighbors}
	\caption{Illustration of the argument in Lemma~\ref{lem:local_equivalent}: A second-best action must either be a neighbor, or it must be dominated everywhere by other actions.}
\end{figure}
\begin{lemma}
	\label{lem:local_equivalent}
	Local internal regret is equal to internal regret. 
\end{lemma}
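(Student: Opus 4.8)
The plan is to show that any global internal departure function $\phi_{i\to j}$ can be replaced, without increasing the opponent's advantage, by a sequence of local departure moves, so that the worst global internal regret is already captured by local internal regret. Since local departure functions form a subset of global ones, the inequality ``local internal regret $\le$ internal regret'' is immediate; the content is the reverse inequality. It suffices to show that for every ordered pair $(i,j)$ with $j \notin N_i$, the comparator term $\sum_{t: i_t = i} c(j, j_t)$ is at least $\sum_{t : i_t = i} c(k, j_t)$ for some neighbor $k \in N_i$ with $k \neq i$ — i.e., switching action $i$ to a \emph{non-neighbor} $j$ is never strictly better than switching $i$ to the best \emph{neighbor} of $i$. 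Summing the per-time inequality over $t$ with $i_t = i$ and then over $i$ then transfers any bound on local internal regret to a bound on global internal regret.

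So I would fix $i$ and consider the set $T_i = \{t : i_t = i\}$, with empirical opponent distribution $q = \frac{1}{|T_i|}\sum_{t \in T_i} e_{j_t} \in \Delta_M$ (if $T_i$ is empty there is nothing to prove). Then $\frac{1}{|T_i|}\sum_{t\in T_i} c(k, j_t) = \ell_k^\tr q$ for every action $k$, so the question becomes purely geometric: among all actions $k \neq i$, is the minimizer of $\ell_k^\tr q$ always a neighbor of $i$? The key geometric fact, illustrated in Figure~\ref{fig:neighbors}, is this. The cell $C_i$ is the set of $q$ for which $i$ is the best response, i.e. where the linear functional $\ell_i^\tr q$ is minimal. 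Consider the ``second-best response'' action at $q$: the action $k$ minimizing $\ell_k^\tr q$ over $k \neq i$. Claim: such $k$ must be a neighbor of $i$. Indeed, consider the segment from $q$ toward $C_k$, and let $q'$ be the first point on this segment (walking away from $q$) at which $i$ ceases to be the unique best response — equivalently, the point where we exit the interior of $C_i$. At $q'$ action $i$ is tied for best response with some action $k'$ whose cell $C_{k'}$ shares the boundary facet of $C_i$ containing $q'$; by definition of $\G$, $k'$ is a neighbor of $i$. I must then argue $\ell_{k'}^\tr q \le \ell_k^\tr q$, so that $k$ being a \emph{minimizer} among non-$i$ actions forces $\ell_{k'}^\tr q = \ell_k^\tr q$ and hence $k$ (or at least an action achieving the same comparator value) can be taken to be a neighbor. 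This follows by a convexity/linearity argument along the segment: $\ell_i^\tr - \ell_{k'}^\tr$ and $\ell_i^\tr - \ell_k^\tr$ are linear, both vanish or change sign appropriately at the relevant points, and since $i$ is best at $q$ but $k$ is best somewhere along the segment while $k'$ becomes co-best first, comparing slopes at $q'$ yields the inequality.

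The main obstacle is making the last geometric step fully rigorous, in particular handling the case where the second-best action $k$ is itself reached only after passing through several cells — the picture suggests that either $k$ is a neighbor of $i$, or along the way we encounter a neighbor $k'$ of $i$ that is \emph{at least as good} as $k$ on $q$, which is what we need. The clean way to phrase it: if $j \notin N_i$ then for all $q \in C_i$ we have $\min_{k \in N_i \setminus \{i\}} \ell_k^\tr q \le \ell_j^\tr q$, because walking from $q$ toward any point in $C_j$ we must cross $\partial C_i$, and at the crossing facet the neighboring action $k'$ satisfies $\ell_{k'}^\tr q \le \ell_j^\tr q$ (the functional $\ell_j^\tr - \ell_{k'}^\tr$ is linear, is $\ge 0$ at the crossing point since $k'$ is the best response there among cells adjacent across that facet in the direction of $q$... ) — one tightens this by choosing the target point in $C_j$ appropriately, e.g. a point where $j$ is the best response, and using that $\ell_j$ is then minimal there. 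I also need the minor observation that we may restrict to $q \in C_i$ because $\sum_{t \in T_i} e_{j_t}$, after normalization, need not lie in $C_i$ — but in fact it must, since the algorithm (or rather, the comparator argument) only ever needs the bound for the empirical $q$, and if $q \notin C_i$ then $i$ was not even a best response, in which case a standard argument shows the external-to-internal reduction still goes through; alternatively, one notes the inequality $\min_{k \in N_i \setminus\{i\}} \ell_k^\tr q \le \ell_j^\tr q$ for $j \notin N_i$ can be shown to hold for \emph{all} $q \in \Delta_M$, not just $q \in C_i$, by the same crossing argument applied from the cell actually containing $q$. Once the per-$i$, per-$q$ inequality is established, summing gives the lemma.
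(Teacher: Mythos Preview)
Your central geometric step is essentially the paper's: both of you argue that if $i$ is the best response to some $q\in\Delta_M$ then any second-best response must be a neighbor of $i$, by moving along a segment from $q$ toward the competing cell and showing that a non-neighboring second-best action would otherwise be dominated. The paper phrases this as a contradiction against the standing assumption that no action is globally dominated, while you phrase it as a direct crossing/comparison argument, but the content is the same.

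The genuine gap is exactly the issue you flag at the end and then try to dismiss: the empirical $q=|T_i|^{-1}\sum_{t\in T_i}e_{j_t}$ need \emph{not} lie in $C_i$, because the player's move $i$ on those rounds has no reason to have been a best response. Your two patches both fail. The first (``in fact it must, since the algorithm\ldots'') is not true for an arbitrary play sequence. The second --- that $\min_{k\in N_i\setminus\{i\}}\ell_k^\tr q\le \ell_j^\tr q$ for \emph{all} $q\in\Delta_M$ and all $j\notin N_i$ --- is false. Take $M=2$, $\ell_1=(0,3)$, $\ell_2=(1,1)$, $\ell_3=(3,0)$; then $C_1,C_2,C_3$ tile the interval with $1$ and $3$ non-neighbors. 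For $i=1$, $j=3$ and $q=e_2\in C_3$ one gets $\ell_2^\tr q=1>0=\ell_3^\tr q$, so the inequality is violated. The same example shows the lemma, read literally as an equality of the two regret values for arbitrary play, cannot hold: if the player plays action $1$ throughout and the opponent plays column $2$ throughout, local internal regret is $2$ per round while global internal regret is $3$. The paper's own proof does not close this gap either --- it simply asserts ``it is enough to show'' that best and second-best responses are neighbors and proceeds --- so you identified the right geometric fact; what that fact does cleanly yield is that $C_i=\{q:\ell_i^\tr q\le \ell_k^\tr q\ \mbox{for all } k\in N_i\}$ (non-neighbor constraints are redundant), from which one can deduce that \emph{small} local internal regret implies \emph{small} global internal regret, which is what the rest of the paper actually uses. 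But your attempted extension to arbitrary $q$ is incorrect and should be dropped.
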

\begin{proof}
	It is enough to show that, for any distribution $q\in \Delta_M$, any best response $i_1$ and any second-best response $i_2$ are neighbors in the graph $\G$. By the way of contradiction, we assume that actions $i_1$ and $i_2$ are not neighbors (that is, $C_{i_1}$ and $C_{i_2}$ do not share a face). We will then arrive at the conclusion that $i_2$ must be dominated by other actions, which is a contradiction because of our assumption that no action is completely dominated (that is minorized) by others. 
	
	Let $g(s) = \min_{i\in[N]} e_i^\tr Ls$ be the minimum loss against the mixed strategy $s$. Since $g$ is a minimum of $N$ linear functions $\{f_k (s) \deq (e_k^\tr L)\cdot s\}_{k=1}^N$, it is concave and piece-wise linear. The linear parts of $g$ correspond to the elements of the partition $\C$. By our assumption, $f_{i_1}(q)< f_{i_2}(q)$ and there is no hyperplane $f_{i_3}$ achieving at $q$ a value in the interval $(f_{i_1}(q), f_{i_2}(q))$. Let 
	$$S = \{(s,t)\in \reals^{M+1}: t=f_{i_1}(s)=f_{i_2}(s) ~\mbox{for some}~ s\in\Delta_M\},$$ 
	the intersection of two hyperplanes over the simplex. Note that projection of $S$ onto the simplex would be  precisely the boundary separating $C_{i_1}$ and $C_{i_2}$ if these were the only two actions. This set cannot be empty, for otherwise action $i_2$ is dominated by $i_1$. Now, pick any $p\in\Delta_M$ such that $f_{i_1}(p) = f_{i_2}(p)$, and let $a = (p,f_{i_1}(p))$ (see Figure~\ref{fig:neighbors}). We will now work with the one-dimensional problem along the line in the simplex defined by $(q,p)$. The fact that $i_1$ and $i_2$ are not neighbors along the direction $(q,p)$ means that there is another action $i_3$ such that $f_{i_3}(p) < f_{i_1}(p)=f_{i_2}(p)$. Since $f_{i_3}(q) \geq f_{i_2}(q) > f_{i_1}(q)$, there must be a point $b=(r, f_{i_3}(r))= (r, f_{i_2}(r))$ of intersection of $f_{i_3}$ and $f_{i_2}$ for some $r\in [q,p]$. It is easy to see that $i_2$ is completely minorized along the direction $(q,p)$: on one side of $r$ it is dominated by $i_1$, while on the other --- by $i_3$. 
	
The argument above works for any direction from $q$ towards the boundary between $C_{i_1}$ and $C_{i_2}$ if $i_1$ and $i_2$ were the only actions. Hence, $i_2$ is globally dominated by other actions, a contradiction.
\end{proof}

\section{Method}

The method is a two-level procedure motivated by Foster and Vohra \cite{FosVoh97} and Blum and Mansour \cite{BluMan07}. The intuition stems from the following observation. Suppose for each vertex $i$ we have a distribution $q_i\in\Delta_N$ supported on the neighbor set $N_i$. Let $p\in \Delta_N$ be defined by $p=Q p$ where $Q$ is the matrix $[q_1,\ldots,q_N]$. Then there are two equivalent ways of sampling an action from $p$. First way is to directly sample the vertex according to $p$. Second is to sample a vertex $i$ according to $p$ and then choose a vertex $j$ within the neighbor set $N_i$ according to $q_i$. Because of the stationarity (or \emph{flow}) condition $p = Qp$, the two ways are equivalent. This idea of finding a fixed point is implicit in \cite{FosVoh97}, and Blum and Mansour \cite{BluMan07} show how stationarity can be used to convert external regret guarantees into an internal regret statement. We show here that, in fact, this conversion can be done ``locally'' and only with ``comparison'' information between neighboring actions.

\begin{figure}[t]
	\centering
		\includegraphics[height=1.6in]{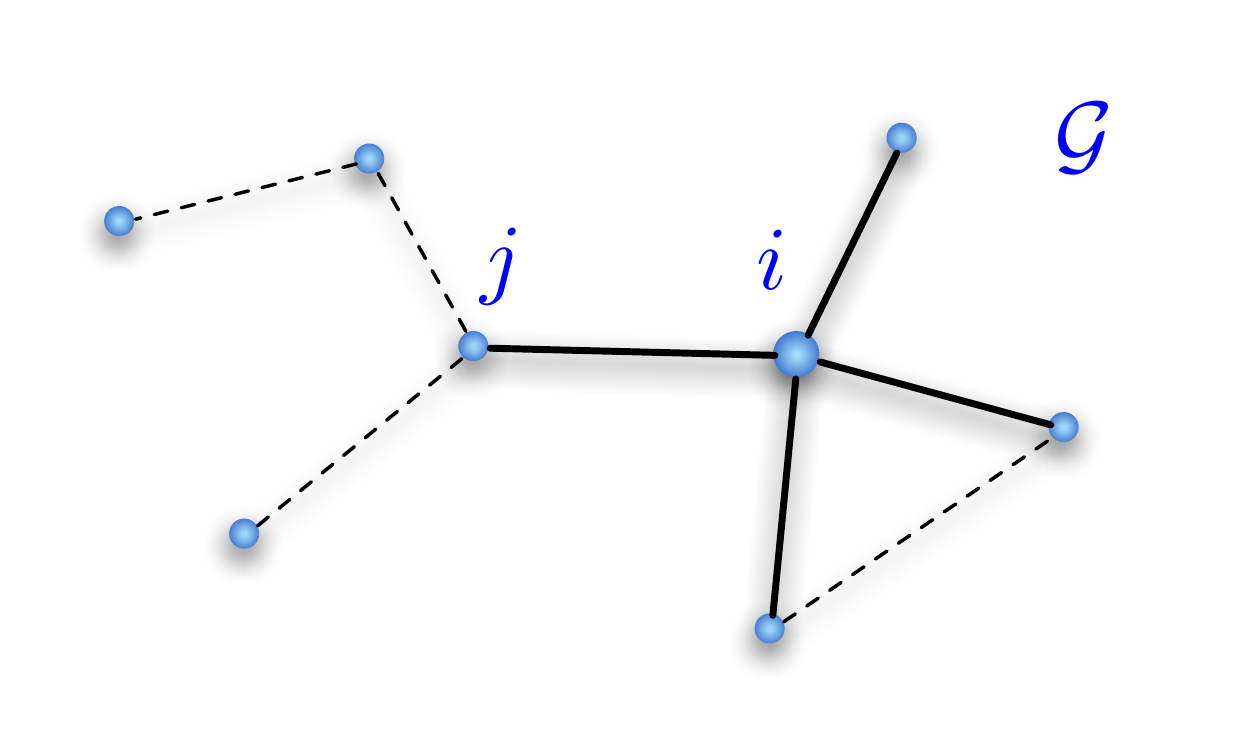}
	\caption{To each vertex $i$ in the graph $\G$ we associate an algorithm $\A_i$. The algorithm plays an action from the distribution $q^t_i$ over its neighborhood set $N_i$ and receives partial information about relative loss between the node $i$ and its neighbor. The other piece of the partial information comes from the times when a neighboring algorithm $\A_j$ is run and the action $i$ is picked.}
	\label{fig:neighborhood_graph}
\end{figure}

\begin{algorithm}[t]
\caption{\algoname{} Algorithm}
\label{alg}
\begin{algorithmic}[1]
\STATE For all $i=\{1,\ldots,N\}$, initialize algorithm $\A_i$ with $q^1_i = x^1_i = {\mathbf 1}_{N_i}/|N_i|$ %(uniform distribution with support on $N_i$)
\FOR{t=1,\ldots, T}
	\STATE Let $Q^t = [q^{t}_1,\ldots,q^t_N]$, where $q^t_i$ is furnished by $\A_i$
	\STATE Find $p^t$ satisfying $p^t = Q^t p^t$
	\STATE Draw $k_t$ from $p^t$
	\STATE Play $I_t$ drawn from $q^t_{k_t}$ and obtain signal $S_{I_t} e_{j_t}$
	\STATE Run local algorithm $\A_{k_t}$ with the received signal 
	\STATE For any $i\neq k_t$, $q^{t+1}_i \leftarrow q^t_i$
\ENDFOR
\end{algorithmic}
\end{algorithm}

\begin{algorithm}
\caption{Local Algorithm $\A_i$}
\label{alg_local}
\begin{algorithmic}[1]
	\STATE If $t=1$, initialize $s = 1$
	\STATE For $r \in \{\tau_i(s-1)+1,\ldots,\tau_i(s)\}$ (i.e. for all $r$ since the last time $\A_i$ was run) construct
	$$ b^r_{(i,j)} = v_{i,j}^\tr \left[ \begin{array}{c}
	 \ind{I_r = i} S_i \\
	 \ind{k_r=i}\ind{I_r = j} S_j/q^r_i(j)
	\end{array} \right] e_{j_r}$$
	for all $j\in N_i$
	\STATE Define for all $j\in N_i$,
			$$h_{(i,j)}^s = \sum_{r=\tau_i(s-1) +1}^{\tau_i(s)} b_{(i,j)}^r$$	
	and let
	$$\tilde{f}^s_i =  \left[ h^s_{(i,j)}\cdot \ind{j\in N_i} \right]_{j\in[N]} $$ 
	\STATE Pass the cost $\tilde{f}^s_i$ to a full-information online convex optimization algorithm over the simplex (e.g. Exponential Weights Algorithm) and receive the next distribution $x^{s+1}$ supported on $N_i$
	\STATE Define $$q^{t+1}_i \leftarrow (1-\gamma)x^{s+1} + (\gamma/|N_i|) \bone_{N_i}$$
	\STATE Increase the count $s \leftarrow s+1$
\end{algorithmic}
\end{algorithm}

Our procedure is as follows. We run $N$ different algorithms $\A_1,\ldots,\A_N$, each corresponding to a vertex and its neighbor set. Within this neighbor set we obtain small regret because we can construct estimates of loss differences  among the actions, thanks to the local observability condition. Each algorithm $\A_i$ produces a distribution $q^t_i \in \Delta_N$ at round $t$, reflecting the relative performance of the vertex $i$ and its neighbors. Since $\A_i$ is only concerned with its local neighborhood, we require that $q^t_i$ has support on $N_i$ and is zero everywhere else. The meta algorithm \algoname{} combines the distributions $Q^t=[q^t_1,\ldots,q^t_N]$ and computes $p^t$ as a fixed point 
\begin{align}
	\label{eq:stationarity}
	p^t = Q^t p^t \ . 
\end{align}

How do we choose our actions? At each round, we draw $k_t \sim p_t$ and then $I_t\sim q^t_{k_t}$ according to our two-level scheme. The action $I_t$ is the action we play in the partial monitoring game against the adversary. Let the action played by the adversary at time $t$ be denoted by $j_t$. Then the feedback we obtain is $S_{I_t} e_{j_t}$. This information is passed to $\A_{k_t}$ which updates the distributions $q^t_{k_t}$. In Section~\ref{sec:estimation} we detail how this is done.

\subsection{Main Result}

The main result of the paper is the following internal regret guarantee.
\begin{theorem}
	\label{thm:regret}
	Local internal regret of Algorithm~\ref{alg} is bounded as
	\begin{align*} 
		\sup_{\phi}\E \left\{ \sum_{t=1}^T (e_{I_t}-e_{\phi(I_t)})^\tr L e_{j_t} \right\} \leq 4N\bar{v}\sqrt{6 (\log N) T} %+ T\gamma\bar{\ell}
	\end{align*}
	where $\bar{v} = \max_{(i,j)} \|v_{(i,j)}\|_\infty$ and supremum is taken over all local departure functions. 
\end{theorem}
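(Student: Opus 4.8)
The plan is a ``local'' Blum--Mansour reduction. Fix a local departure $\phi_{a\to b}$. Since $I_t$ is drawn by first sampling $k_t\sim p^t$ and then $I_t\sim q^t_{k_t}$, the flow condition $p^t=Q^tp^t$ gives $\P(I_t=i\mid\F_{t-1})=p^t_i$, so (using that $j_t$ is $\F_{t-1}$-measurable, hence the conditional expectation factors) the quantity to bound equals $\E\sum_t p^t_a(\ell_a-\ell_b)\cdot e_{j_t}$. I would then write, for every vertex $i$, the $p^t_i$-weighted external-regret inequality of $\A_i$ against the comparator $i$ (and against $b$ for $i=a$), and sum over all $N$ vertices. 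On the left, $\sum_i p^t_i\sum_j q^t_i(j)\ell_j=\sum_j p^t_j\ell_j$ collapses — again by $p^t=Q^tp^t$ — to the true per-round expected loss, which is exactly what the $N-1$ trivial comparators contribute on the right, so these cancel, leaving $\E\sum_t p^t_a(\ell_a-\ell_b)\cdot e_{j_t}\le\sum_i\E\,\mathrm{reg}(\A_i)+O(\gamma\bar v T)/(1-\gamma)$, the last term being the overhead from mixing each $q^t_i$ with the uniform distribution on $N_i$.

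Next I would check that $\A_i$ is fed an unbiased estimate of the right quantity. Splitting $v_{(i,j)}=(v^{(1)},v^{(2)})$ along the block structure of $S_{(i,j)}$ and conditioning on $\F_{r-1}$, one uses $\P(I_r=i\mid\F_{r-1})=p^r_i$ for the first summand of $b^r_{(i,j)}$ and $\P(k_r=i,I_r=j\mid\F_{r-1})=p^r_i q^r_i(j)$ (so the importance weight $1/q^r_i(j)$ is unbiased) for the second; by local observability, $\E[b^r_{(i,j)}\mid\F_{r-1}]=p^r_i(v^{(1)\tr}S_i+v^{(2)\tr}S_j)e_{j_r}=p^r_i(\ell_j-\ell_i)\cdot e_{j_r}$, so $\tilde f^s_i$ is unbiased for the $p^r_i$-weighted relative losses summed over block $s$. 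Two observations keep the aggregated costs tractable: every column of $S_{(i,j)}$ has exactly two ones, so $|(\ell_j-\ell_i)\cdot e_{j_r}|\le 2\bar v$; and the large (importance-weighted) summand of $b^r_{(i,j)}$ is nonzero only when $k_r=i$, i.e.\ only at a block endpoint, so within a long block every other round contributes merely the \emph{un-normalized} first summand — which itself vanishes unless $I_r=i$, an event of probability $p^r_i$ that is small exactly when the block is long. Using the exploration floor $q^r_i(j)\ge\gamma/|N_i|$, the total mass $\A_i$ ever sees is thus, in expectation, $O(\bar v\sum_r p^r_i)$ from first summands plus $O(\bar v|N_i|/\gamma)$ per block.

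With this I would feed $\tilde f^1_i,\tilde f^2_i,\dots$ to exponential weights and use a variance-sensitive regret bound, carried out at the level of the individual increments $b^r_{(i,j)}$ (distribution frozen within a block) so that no crude bound on $\|\tilde f^s_i\|_\infty$ is needed. The crucial estimate is $\E[\sum_j q^r_i(j)(b^r_{(i,j)})^2\mid\F_{r-1}]\le\bar v^2|N_i|p^r_i$: the inverse weight $1/q^r_i(I_r)^2$ is damped by the $q^r_i(I_r)$ from averaging over $I_r$ and by another $p^r_i$ from $\ind{k_r=i}$. Summing over rounds and over $i$, and using $\sum_i p^r_i=1$, gives $\sum_i\E[\text{variance term}]\le\bar v^2NT$ — only $N$, crucially, not $N^2$ or $N/\gamma$. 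Optimizing a common learning rate $\eta\asymp\bar v^{-1}\sqrt{(\log N)/T}$ yields $\sum_i\E\,\mathrm{reg}(\A_i)=O(\bar vN\sqrt{(\log N)T})$, and choosing $\gamma\asymp N\sqrt{(\log N)/T}$ balances this against the overhead $O(\gamma\bar vT)$, producing the stated $4N\bar v\sqrt{6(\log N)T}$ once the constants (the $(1-\gamma)$ factors and the constant of the exponential-weights bound) are collected.

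I expect the first paragraph to be bookkeeping, modulo care about measurability and the telescoping through the fixed point. The main obstacle is the third paragraph: since a block can be long, $\tilde f^s_i$ cannot be treated as a bounded loss, so the second-order analysis must be done per round, the point being that the inverse-probability weights, after taking expectations and summing over vertices, aggregate only to $O(N)$ because $\sum_i p^r_i=1$ and $q^r_i$ is $\gamma$-floored. A secondary, fiddly point is the final incomplete block of each $\A_i$ — the rounds after its last activation, which enter no $\tilde f^s_i$; I would dispose of it either by forcing one activation of every $\A_i$ at time $T$ or by directly estimating its (lower-order) contribution to $\E\sum_t p^t_a(\ell_a-\ell_b)\cdot e_{j_t}$.
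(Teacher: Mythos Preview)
Your first two paragraphs are essentially the paper's reduction: the flow condition $p^t=Q^tp^t$ is used exactly as you describe (to replace $\phi(I_t)$ by $\phi(k_t)$ and to get $\P(I_t=i\mid\F_{t-1})=p^t_i$), and the unbiasedness computation for $b^r_{(i,j)}$ is Lemma~\ref{lem:unbiasedness}. So far so good.

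The gap is in the third paragraph. You want to control the exponential-weights variance by the per-round quantity $\sum_r \E\bigl[\sum_j q^r_i(j)(b^r_{(i,j)})^2\bigr]$. But $\A_i$ sees and updates only on the block-aggregated losses $\tilde f^s_i=\sum_{r\in\text{block }s} b^r$, so its regret bound carries the term $\sum_s(\|\tilde f^s_i\|^*_{x^s})^2=\sum_s\sum_j x^s(j)\bigl(\sum_r b^r_{(i,j)}\bigr)^2$, which is \emph{not} $\sum_s\sum_r\sum_j x^s(j)(b^r_{(i,j)})^2$: the cross terms $\sum_{r\neq r'} b^r_{(i,j)}b^{r'}_{(i,j)}$ do not vanish, and there can be many of them since a block can be long. ``Distribution frozen within a block'' doesn't rescue this---if you imagine per-round exponential weights on $b^r$, the played distribution $x^s$ differs from the internal one, and bounding that drift again requires controlling how many nonzero $b^r$'s a block contains. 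Your estimate $\sum_i\sum_r\E[\cdot]\le\bar v^2NT$ is correct but does not plug into the regret inequality you actually have.

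What the paper does instead (Lemma~\ref{lem:variance}) is split $\tilde f^s_i=g+h$, where $h$ comes from the single round $r=\tau_i(s)$ with $k_r=i$ (the importance-weighted piece) and is handled by the bandit-style cancellation you describe, while $g$ collects the un-normalized first summands over the block, nonzero only on rounds with $I_r=i$. The key step, which has no analogue in your outline, is that by flow $\P(I_r=i\mid\F_{r-1})=\P(k_r=i\mid\F_{r-1})$, so conditioned on either event occurring, each has probability $1/2$; a short backward induction then gives $\E[(g(j))^2]\le O(\bar v^2)$ uniformly in the block---the number of accumulations before the block closes is effectively geometric with parameter $1/2$. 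You gesture at the right intuition (``an event of probability $p^r_i$ that is small exactly when the block is long''), but you use it only to bound first moments, not the block-level second moment that is actually needed.

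A minor point: for the in-expectation statement the paper takes $\gamma=0$ (the importance weights cancel in expectation without a floor), whereas your choice $\gamma\asymp N\sqrt{(\log N)/T}$ would add an unnecessary same-order term and miss the stated constant.
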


The next Corollary is immediate given Lemma~\ref{lem:local_equivalent}:

\begin{corollary}
	Internal regret of Algorithm~\ref{alg} is also bounded as in Theorem~\ref{thm:regret}.
\end{corollary}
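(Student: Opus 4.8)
The plan is a Blum--Mansour-style reduction of local internal regret to the external regret of the $N$ sub-algorithms $\A_1,\dots,\A_N$, with the flow condition~\eqref{eq:stationarity} supplying the bookkeeping identity and the local-observability estimators $b^r_{(i,j)}$ supplying the scaled, unbiased loss differences each $\A_i$ needs. Write $\F_{t-1}$ for the information available before the randomization of round $t$, so that $p^t,Q^t,j_t$ are $\F_{t-1}$-measurable. Since each column $q^t_i$ of $Q^t$ lies in the simplex and puts mass at least $\gamma/|N_i|$ on every vertex of $N_i$, the matrix $(Q^t)^\tr$ is an irreducible aperiodic transition matrix on the connected graph $\G$; its unique stationary distribution solves~\eqref{eq:stationarity}, is strictly positive (so the division by $q^r_i(j)$ is legitimate), and equals the marginal law of $I_t$, because drawing $k_t\sim p^t$ and then $I_t\sim q^t_{k_t}$ gives $\P(I_t=i\mid\F_{t-1})=(Q^tp^t)_i=p^t_i$ and $\P(k_t=i,\,I_t=j\mid\F_{t-1})=p^t_iq^t_i(j)$.

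I would next record unbiasedness. Splitting $v_{(i,j)}=(v^{(1)};v^{(2)})$ along the blocks $S_i,S_j$, local observability reads $S_i^\tr v^{(1)}+S_j^\tr v^{(2)}=\ell_j-\ell_i$, so the identities above give $\E[b^r_{(i,j)}\mid\F_{r-1}]=p^r_i(v^{(1)})^\tr S_ie_{j_r}+p^r_i(v^{(2)})^\tr S_je_{j_r}=p^r_i(\ell_j-\ell_i)^\tr e_{j_r}$, the factor $1/q^r_i(j)$ de-biasing the event $\{I_r=j\}$. Hence $\E\sum_s\tilde f^s_i(j)=\E\sum_{r\le\tau_i(S_i)}p^r_i(\ell_j-\ell_i)^\tr e_{j_r}$, and $\tilde f^s_i(i)\equiv 0$ since $v_{(i,i)}=0$; that is, the costs fed to $\A_i$ are in expectation the Blum--Mansour-scaled loss differences $p^r_i(\ell_j-\ell_i)^\tr e_{j_r}$ accumulated over the (random) block, equivalently $p^r_i\,\ell_j^\tr e_{j_r}$ after the regret-invisible per-round shift by $p^r_i\,\ell_i^\tr e_{j_r}$. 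Now fix a local departure function $\phi_{i_0\to j_0}$, $j_0\in N_{i_0}$, and put $\ell^t=Le_{j_t}$; since $\phi_{i_0\to j_0}$ only moves $i_0$, the quantity in the theorem equals $\E\sum_t p^t_{i_0}(\ell^t_{i_0}-\ell^t_{j_0})$. From $\sum_m p^t_mq^t_m=p^t$ we get $\E\sum_t\langle p^t,\ell^t\rangle=\sum_m\E\sum_t p^t_m\langle q^t_m,\ell^t\rangle$; subtracting $\E\sum_t\big(\sum_{m\ne i_0}p^t_m\ell^t_m+p^t_{i_0}\ell^t_{j_0}\big)$ from both sides turns the left side into exactly $\E\sum_t p^t_{i_0}(\ell^t_{i_0}-\ell^t_{j_0})$ and the right side into $\sum_{m\ne i_0}\E\sum_t p^t_m\langle q^t_m,\ell^t-\ell^t_m\bone\rangle+\E\sum_t p^t_{i_0}\big(\langle q^t_{i_0},\ell^t-\ell^t_{i_0}\bone\rangle-(\ell^t_{j_0}-\ell^t_{i_0})\big)$. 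Each summand is governed by one sub-algorithm: for $m\ne i_0$ by the external-regret guarantee of $\A_m$ against the constant comparator $m\in N_m$ (whose cost is $0$), and for $m=i_0$ against the comparator $j_0\in N_{i_0}$; passing to expectations through the unbiasedness identity and absorbing the $O(\gamma\bar v)$ per-round slack relating $q^t_m$ to the un-mixed $x^s$ bounds each summand by $\E[R_m]+O(\gamma\bar v T)$, where $R_m$ is the realized exponential-weights regret of $\A_m$ (one final synchronized update of all $\A_m$ disposes of the rounds past each last invocation). Therefore $\sup_{\phi}(\cdot)\le\sum_{m=1}^N\E[R_m]+O(\gamma\bar v T)$.

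It remains to bound $\E[R_m]$. With $\A_m$ run as exponential weights on $\Delta(N_m)$ on the batched costs $\tilde f^s_m$, the second-order bound gives $R_m\le\eta^{-1}\log N+\eta\sum_s\sum_{j\in N_m}x^s(j)\,\tilde f^s_m(j)^2$, and the work is in estimating $\E\sum_s\sum_j x^s(j)\tilde f^s_m(j)^2$. Two facts suffice: (i) the second coordinate of $b^r_{(m,j)}$ is nonzero only at $r=\tau_m(s)$, and averaging over $\A_m$'s own draw $I_{\tau_m(s)}\sim q^{\tau_m(s)}_m$ collapses the importance weights, $\sum_j q^r_m(j)\,x^s(j)/q^r_m(j)^2=\sum_j x^s(j)/q^r_m(j)\le|N_m|/(1-\gamma)\le 2N$, so this part contributes $O(\bar v^2N)$ per block; (ii) the first coordinate is bounded by $\bar v$ and fires only on $\{I_r=m\}$, of probability $p^r_m$, while a block of $\A_m$ ends at the first $r$ with $k_r=m$, again of probability $p^r_m$, so the expected number of hits per block is $O(1)$ and the expected squared block-sum of the first coordinate is $O(\bar v^2)$. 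Summing over the $S_m$ blocks and then over $m$, and using $\sum_m S_m=T$, gives $\sum_m\E\sum_s\sum_j x^s(j)\tilde f^s_m(j)^2=O(N\bar v^2T)$. Choosing $\eta\asymp\bar v^{-1}\sqrt{(\log N)/T}$ and $\gamma\asymp\sqrt{(\log N)/T}$ (so the reduction's mixing slack is also $O(\bar v\sqrt T)$) yields $\sum_m\E[R_m]=O\!\big(N\bar v\sqrt{(\log N)T}\big)$, and tracking constants gives the stated $4N\bar v\sqrt{6(\log N)T}$.

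The crux is this last step: reconciling the importance weights $1/q^r_m(j)$, a priori of order $|N_m|/\gamma$, with batching over blocks whose endpoints $\tau_i(s)$ are random stopping times, all while keeping $\gamma$ small enough that the mixing slack in the reduction stays $O(\sqrt T)$. What saves the argument is the coincidence that ``$\A_m$ is invoked'' and ``$I_t=m$'' both have $\F_{t-1}$-probability $p^t_m$, so long gaps between invocations of $\A_m$ coincide with small per-round weights; turning this into a bound requires a careful filtration/stopping-time argument, a check that the estimators remain in the range where the exponential-weights bound is valid, and the minor end-of-horizon correction mentioned above.
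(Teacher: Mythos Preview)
Your proposal does not prove the Corollary; it re-proves Theorem~\ref{thm:regret}. The Corollary concerns \emph{global} internal regret, i.e.\ the supremum over \emph{all} departure functions $\phi_{i\to j}$, whereas you explicitly restrict to ``a local departure function $\phi_{i_0\to j_0}$, $j_0\in N_{i_0}$'' and never leave that class. Everything you wrote---the flow identity, the unbiasedness of $b^r_{(i,j)}$, the block-variance control via the coincidence $\P(k_t=m\mid\F_{t-1})=\P(I_t=m\mid\F_{t-1})=p^t_m$---is the machinery of Theorem~\ref{thm:regret} itself, and the paper already proves all of this. What you have produced is a competent paraphrase of the proof of Theorem~\ref{thm:regret}, not of the Corollary.

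The paper's proof of the Corollary is a single sentence: invoke Lemma~\ref{lem:local_equivalent}, which asserts that local internal regret equals global internal regret. The substantive content is entirely in that lemma, and it is a geometric/combinatorial fact about the cell decomposition $\C$, having nothing to do with the algorithm: for any $q\in\Delta_M$, the best response and the second-best response must be neighbors in $\G$, because otherwise the second-best action would be globally dominated (contradicting the assumption that every $C_i$ is nonempty). Consequently, for any global $\phi_{i\to j}$ with $j\notin N_i$, the loss difference $\sum_t\ind{I_t=i}(\ell_i-\ell_j)^\tr e_{j_t}$ is already dominated by the corresponding quantity for some neighbor of $i$, so the supremum over all departure functions is attained on local ones. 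Your proposal contains no trace of this argument, and without it the step from ``local'' to ``global'' is simply missing.
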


We remark that high probability bounds can also be obtained in a rather straightforward manner, using, for instance, the approach of \cite{AbeRak09colt}. Another extension, the case of random signals, is discussed in Section~\ref{sec:randomsignals}.

\subsection{Estimating loss differences}
\label{sec:estimation}
The random variable $k_t$ drawn from $p^t$ at time $t$ determines which algorithm is active on the given round. Let 
$$\tau_i(s) = \min\{t~:~s = \sum_{r=1}^t \ind{k_t = i} \}$$ 
denote the (random) time when the algorithm $\A_i$ is invoked for the $s$-th time. By convention, $\tau_i(0) = 0$. Further, define 
%inverse of $\tau_i$ as $$\tau_i^{-1}(t) = \max\{s: \tau_i(s) < t\},$$ the number of times the algorithm $\A_i$ has been invoked up to (but not including) $t$. We shall use $\tau_i(\tau_i^{-1}(t)+1)$
$$\pi_i(t) = \min\{t'\geq t~:~ k_{t'} = i \}$$
to denote the next time the algorithm is run on or after time $t$. When invoked for the $s$-th time, the algorithm $\A_{i}$ constructs estimates 
$$ b^r_{(i,j)} \deq v_{i,j}^\tr \left[ \begin{array}{c}
 \ind{I_r = i} S_i \\
 \ind{k_r=i}\ind{I_r = j} S_j/q^r_i(j)
\end{array} \right] e_{j_r}  ~, ~~~~~~ \forall r \in \{\tau_i(s-1)+1,\ldots,\tau_i(s)\}, ~\forall j\in N_i $$
for all the rounds  after it has been run the last time, until (and including) the current time $r=\tau_i(s)$. We can assume  $b^t_{(i,j)}=0$ for any $j\notin N_i$. The estimates $b^t_{(i,j)}$ can be constructed by the algorithm because $S_{I_r} e_{j_r}$ is precisely the feedback given to the algorithm. 

Let $\F_t$ be the $\sigma$-algebra generated by the random variables $\{k_1,I_1,\ldots,k_t,I_t\}$. For any $t$, the (conditional) expectation, 
\begin{align}
	\label{eq:unbiased_b}
	\E \left[ b^t_{(i,j)} | \F_{t-1} \right] &= \sum_{k=1}^N p^t_k q^t_k(i) \cdot v_{i,j}^\tr \left[ \begin{array}{c}
 S_i  \nonumber \\
 0 
\end{array} \right] e_{j_t} + p^t_i q^t_i(j) \cdot v_{i,j}^\tr \left[ \begin{array}{c}
 0 \\
 S_j/q^t_i(j))
\end{array} \right] e_{j_t} \nonumber  \\
&= p^t_i v_{i,j}^\tr S_{(i,j)} e_{j_t} \nonumber  \\
&= p^t_i (\ell_j-\ell_i)^\tr e_{j_t} \nonumber \\ 
&= p^t_i (e_j-e_i)^\tr L e_{j_t}
\end{align}
where in the second equality we used the fact that $\sum_{k=1}^N p^t_k q^t_k(i) = p^t_i$ by stationarity \eqref{eq:stationarity}.
Thus each algorithm $\A_i$, on average, has access to unbiased estimates of the loss differences within its neighborhood set. 

Recall that algorithm $\A_i$ is only aware of its neighborhood, and therefore we peg coordinates of $q^t_i$ to zero outside of $N_i$. However, for convenience, our notation below still employs full  $N$-dimensional vectors, and we keep in mind that only coordinates indexed by $N_i$ are considered and modified by $\A_i$. 

When invoked for the $s$-th time (that is, $t=\tau_i(s)$), $\A_i$ constructs linear functions (cost estimates) $\tilde{f}^s_i \in \reals^N$ defined by 
$$\tilde{f}^s_i =  \left[ h^s_{(i,j)}\cdot \ind{j\in N_i} \right]_{j\in[N]}, $$
where
$$h_{(i,j)}^s = \sum_{r=\tau_i(s-1) +1}^{\tau_i(s)} b_{(i,j)}^r \ .$$

We now show that $\tilde{f}^s_i \cdot q^{\tau(s)}_i$ has the same conditional expectation as the actual loss of the meta algorithm \algoname{} at time $t=\tau_i(s)$. That is, by bounding expected regret of the black-box algorithm operating on $\{\tilde{f}^s_i\}$, we bound the actual regret suffered by the meta algorithm on the rounds when $\A_i$ was invoked.

\begin{lemma}	
	\label{lem:unbiasedness}
	Consider algorithm $\A_i$. It holds that
	$$ \E\left\{ (q^{\tau_i(s+1)}_i-e_u)^\tr L e_{j_{\tau_i(s+1)}} ~\middle|~ \F_{\tau_i(s)}\right\} = \E\left\{ \tilde{f}^{s+1}_i \cdot (q^{\tau_i(s+1)}_i-e_u) ~\middle|~ \F_{\tau_i(s)}\right\}$$
	for any $u\in N_{i}$.
\end{lemma}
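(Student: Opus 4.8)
The plan is to rewrite both sides in the common form $\E\big[\sum_{r=1}^{T}\ind{\sigma<r\le\tau}\,m_r\mid\F_\sigma\big]$, where $\sigma=\tau_i(s)$, $\tau=\tau_i(s+1)$, $q\deq q^{\tau}_i$, and $m_r\deq p^r_i\,(q-e_u)^\tr L e_{j_r}$. Each side will be exhibited as $\sum_r\ind{\sigma<r\le\tau}\xi_r$ for an adapted sequence $\{\xi_r\}$ whose one-step conditional mean $\E[\xi_r\mid\F_{r-1}]$ equals $m_r$ on that event; since the two sides then differ by a sum of martingale increments that vanishes up to time $\sigma$, a single optional-stopping argument closes the gap. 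Three preliminary facts make this precise: $q^{\tau}_i=q^{\sigma+1}_i$ because $\A_i$ is frozen between consecutive invocations (Algorithm~\ref{alg}), so $q$ is $\F_\sigma$-measurable and, being supported on $N_i$ with $u\in N_i$, so is $q-e_u$; $q^r_i(j)\ge\gamma/|N_i|>0$ for $j\in N_i$ by the uniform mixing in Algorithm~\ref{alg_local}, so the $b^r_{(i,j)}$ are well defined; and $j_r$ is $\F_{r-1}$-measurable (automatic for an oblivious adversary).

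For the right-hand side I would unfold $\tilde f^{s+1}_i\cdot(q-e_u)=\sum_{j\in N_i}h^{s+1}_{(i,j)}(q_j-\ind{j=u})=\sum_{r}\ind{\sigma<r\le\tau}\sum_{j\in N_i}b^r_{(i,j)}(q_j-\ind{j=u})$, noting that $\{\sigma<r\le\tau\}$ records only whether $\A_i$ has been invoked in rounds $\sigma+1,\dots,r-1$ and is hence $\F_{r-1}$-measurable, and that on this event $q=q^r_i$, so the summand is adapted. Then $\xi_r=\sum_{j\in N_i}b^r_{(i,j)}(q_j-\ind{j=u})$, and combining \eqref{eq:unbiased_b} with the identity $\sum_{j\in N_i}(q_j-\ind{j=u})(e_j-e_i)=q-e_u$ — which holds because $\sum_{j\in N_i}q_j=1$ and $u\in N_i$, so the $e_i$ contributions cancel — gives $\E[\xi_r\mid\F_{r-1}]=m_r$ on $\{\sigma<r\le\tau\}$. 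For the left-hand side I would write $(q-e_u)^\tr L e_{j_\tau}=\sum_{r}\ind{\tau=r}(q-e_u)^\tr L e_{j_r}$ and factor $\ind{\tau=r}=\ind{\sigma<r\le\tau}\cdot\ind{k_r=i}$, since on the $\F_{r-1}$-event $\{\sigma<r\le\tau\}$ the next invocation of $\A_i$ lands on round $r$ exactly when $k_r=i$; because $k_r$ is drawn from $p^r$, so $\P(k_r=i\mid\F_{r-1})=p^r_i$, the sequence $\xi_r=\ind{k_r=i}(q-e_u)^\tr L e_{j_r}$ again has $\E[\xi_r\mid\F_{r-1}]=m_r$ on that event.

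It then remains to check the reduction $\E\big[\sum_{r}\ind{\sigma<r\le\tau}\xi_r\mid\F_\sigma\big]=\E\big[\sum_{r}\ind{\sigma<r\le\tau}m_r\mid\F_\sigma\big]$ for any adapted $\{\xi_r\}$ with $\ind{\sigma<r\le\tau}\E[\xi_r\mid\F_{r-1}]=\ind{\sigma<r\le\tau}m_r$. I would prove this by setting $M_t=\sum_{r=1}^{t}\ind{\sigma<r\le\tau}(\xi_r-m_r)$; since $\ind{\sigma<r\le\tau}$ is $\F_{r-1}$-measurable, $(M_t)$ is a martingale with $M_\sigma=0$ (all its terms with $r\le\sigma$ vanish), so optional stopping at the bounded stopping time $\sigma\le T$ gives $\E[M_T\mid\F_\sigma]=0$ — equivalently, one may argue directly from $F\cap\{\sigma\le r-1\}\in\F_{r-1}$ for $F\in\F_\sigma$. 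Applying this to the two choices of $\{\xi_r\}$ from the previous paragraph yields the claim.

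The only delicate point is the measure-theoretic bookkeeping around the random summation range $\{\sigma+1,\dots,\tau\}$: one must notice that $\ind{\sigma<r\le\tau}$ is $\F_{r-1}$-measurable (so it can be pulled out of the inner conditional expectation), that $\ind{\tau=r}$ factors as $\ind{\sigma<r\le\tau}\ind{k_r=i}$, and that $q=q^r_i$ on that event, and then to package the reduction as an optional-stopping (or stopped-$\sigma$-algebra) statement rather than as a naive term-by-term tower argument, which would be invalid since $\F_\sigma\not\subseteq\F_{r-1}$ in general. The remaining ingredients — the linear-algebra identity on the right-hand side and the finiteness of all the sums — are routine.
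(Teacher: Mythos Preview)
Your proof is correct and takes essentially the same route as the paper's: both rely on the $\F_{r-1}$-measurability of $\ind{\sigma<r\le\tau}$, the factorization $\ind{\tau=r}=\ind{\sigma<r\le\tau}\,\ind{k_r=i}$ together with $\P(k_r=i\mid\F_{r-1})=p^r_i$, and the unbiasedness~\eqref{eq:unbiased_b}. Your packaging via a common target $m_r$ and an explicit optional-stopping lemma is tidier on the measure-theoretic side than the paper's iterated-tower computation of $\E[\tilde f^{s+1}_i\mid\F_{\tau(s)}]$, and your identity $\sum_{j\in N_i}(q_j-\ind{j=u})(e_j-e_i)=q-e_u$ is precisely the paper's observation that the constant offset $c\,\bone_{N_i}$ annihilates against $q-e_u$.
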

\begin{proof}
	Throughout the proof, we drop the subscript $i$ on $\tau_i$ to ease the notation. Note that $q^{\tau(s+1)}_i = q^{\tau(s)+1}_i$ since the distribution is not updated when algorithm $\A_i$ is not invoked. Hence, conditioned on  $\F_{\tau(s)}$, the variable $(q^{\tau(s+1)}_i-e_u)$ can be taken out of the expectation. We therefore need to show that 
	\begin{align} 
		\label{eq:desired_eq}
		(q^{\tau(s+1)}_i-e_u)\cdot \E\left\{ L e_{j_{\tau(s+1)}}| \F_{\tau(s)}\right\} =  (q^{\tau(s+1)}_i-e_u)\cdot  \E\left\{ \tilde{f}^{s+1}_i | \F_{\tau(s)}\right\}
	\end{align}
First, we can write
\begin{align*}
	\E\left\{ h^{s+1}_{(i,j)} ~~\middle|~~ \F_{\tau(s)}\right\} &= \E\left\{ \sum_{t=\tau(s)+1}^{\tau(s+1)} b^{t}_{(i,j)} ~~\middle|~~\F_{\tau(s)}\right\}\\
	&= \E\left\{ \sum_{t=\tau(s)+1}^{\infty} b^{t}_{(i,j)} \ind{t\leq \tau(s+1)} ~~\middle|~~\F_{\tau(s)}\right\}\\
	&= \sum_{t=\tau(s)+1}^{\infty} \E\left\{ \E\left[  b^{t}_{(i,j)} \ind{t\leq \tau(s+1)} ~\middle|~ \F_{t-1} \right] ~~\middle|~~\F_{\tau(s)}\right\} \\
	&= \sum_{t=\tau(s)+1}^{\infty} \E\left\{ \ind{t\leq \tau(s+1)} \E\left[  b^{t}_{(i,j)} ~\middle|~ \F_{t-1} \right] ~~\middle|~~\F_{\tau(s)}\right\} \ .
\end{align*}
The last step follows because the event $\{t\leq \tau(s+1)\}$ is $\F_{t-1}$-measurable (that is, variables $k_1,\ldots,k_{t-1}$ determine the value of the indicator). By Eq.~\eqref{eq:unbiased_b}, we conclude
\begin{align}
	\label{eq:interm}
	\E\left\{ h^{s+1}_{(i,j)} ~~\middle|~~ \F_{\tau(s)}\right\} &= \sum_{t=\tau(s)+1}^{\infty} \E\left\{ \ind{t\leq \tau(s+1)} p^t_i (e_j-e_i)^\tr L e_{j_t} ~~\middle|~~\F_{\tau(s)}\right\} \ .
\end{align}
Since $\ind{t=\tau(s+1)} = \ind{k_t=i}\ind{t\leq \tau(s+1)}$, we have
\begin{align*}
\E \left\{ \ind{t=\tau(s+1)} e_{j_t} ~~\middle|~~ \F_{\tau(s)} \right\} &= \E\left\{\E\left\{ \ind{k_t=i}\ind{t\leq \tau(s+1)}e_{j_t} ~~\middle|~~ \F_{t-1}  \right\} ~~\middle|~~ \F_{\tau(s)} \right\} \\
&= \E\left\{\ind{t\leq \tau(s+1)} e_{j_t} \E\left\{ \ind{k_t=i} ~~\middle|~~ \F_{t-1}  \right\} ~~\middle|~~ \F_{\tau(s)}  \right\} \\
&= \E\left\{\ind{t\leq \tau(s+1)} \P(k_t=i ~\middle|~ \F_{t-1}) e_{j_t} ~~\middle|~~ \F_{\tau(s)}  \right\} \\
&= \E\left\{\ind{t\leq \tau(s+1)} p^t_i e_{j_t} ~~\middle|~~ \F_{\tau(s)}  \right\} .
\end{align*}
Combining with Eq.~\eqref{eq:interm},
\begin{align*}
	\E\left\{ h^{s+1}_{(i,j)} ~~\middle|~~ \F_{\tau(s)}\right\} &= \sum_{t=\tau(s)+1}^{\infty} \E\left\{ \ind{t\leq \tau(s+1)} p^t_i (e_j-e_i)^\tr L e_{j_t} ~~\middle|~~\F_{\tau(s)}\right\} \\
	&= \sum_{t=\tau(s)+1}^{\infty} \E\left\{ \ind{t= \tau(s+1)} (e_j-e_i)^\tr L e_{j_t} ~~\middle|~~\F_{\tau(s)}\right\}
\end{align*}
Observe that coordinates of $\tilde{f}^{s+1}_i$, $q^{\tau(s+1)}_i$, and $e_u$ are zero outside of $N_{i}$. We then have that 
\begin{align*}
	\E\left\{ \tilde{f}^{s+1}_{i} ~~\middle|~~ \F_{\tau(s)}\right\} &= 	\left[ \ind{j\in N_i} \E\left\{ h^{s+1}_{(i,j)} ~~\middle|~~ \F_{\tau(s)}\right\} \right]_{j\in N} \\
	&= \left[\ind{j\in N_i} \sum_{t=\tau(s)+1}^{\infty} \E\left\{ (e_{j}-e_{i})^\tr L e_{j_t} \ind{ t = \tau(s+1) }  ~~\middle|~~\F_{\tau(s)}\right\}  \right]_{j\in N} \\
	&= \left[\ind{j\in N_i} \sum_{t=\tau(s)+1}^{\infty} \E\left\{ e_{j} L e_{j_t} \ind{ t = \tau(s+1) }  ~~\middle|~~\F_{\tau(s)}\right\}  \right]_{j\in N} - c \cdot \bone_{N_i}
\end{align*}
where $$c = \sum_{t=\tau(s)+1}^{\infty} \E\left\{ e_{i} L e_{j_t} \ind{ t = \tau(s+1) }  ~~\middle|~~\F_{\tau(s)}\right\}$$ is a scalar. When multiplying the above expression by $q^{\tau(s+1)}_i-e_u$, the term $c \cdot \bone_{N_i}$ vanishes. Thus, minimizing regret with relative costs (with respect to the $i$th action) is the same as minimizing regret with the absolute costs. We conclude that
\begin{align*}
	(q^{\tau(s+1)}_i-e_u) \E\left\{ \tilde{f}^{s+1}_{i} ~~\middle|~~ \F_{\tau(s)}\right\} &= (q^{\tau(s+1)}_i-e_u)\cdot \left[\sum_{t=\tau(s)+1}^{\infty} \E\left\{ e_{j} L e_{j_t} \ind{ t = \tau(s+1) }  ~~\middle|~~\F_{\tau(s)}\right\}  \right]_{j\in N_{i}}\\
	&= (q^{\tau(s+1)}_i-e_u) \cdot \sum_{t=\tau(s)+1}^{\infty} \E\left\{ Le_{j_t} \ind{ t = \tau(s+1) }  ~~\middle|~~\F_{\tau(s)}\right\} \\
	&= (q^{\tau(s+1)}_i-e_u)\cdot \E\left\{ L e_{j_{\tau(s+1)}} ~~\middle|~~ \F_{\tau(s)}\right\}
\end{align*}
\end{proof}

\subsection{Regret Analysis}
\label{sec:analysis}

For each algorithm $\A_i$, the estimates $\tilde{f}^s_i$ are passed to a full-information black box algorithm which works only on the coordinates $N_i$. From the point of view of the full-information black box, the game has length $T_i = \max\{s: \tau_i(s)\leq T\}$, the (random) number of times action $i$ has been played within $T$ rounds.

We proceed similarly to \cite{AbeRak09colt}: we use a full-information online convex optimization procedure with an entropy regularizer (also known as the Exponential Weights Algorithm) which receives the vector $\tilde{f}^s_i$ and returns the next mixed strategy $x^{s+1} \in \Delta_N$ (in fact, effectively in $\Delta_{|N_i|}$). We then define 
$$q^{t+1}_i = (1-\gamma)x^{s+1} + (\gamma/|N_i|) \bone_{N_i}$$ 
where $\gamma$ is to be specified later. Since $\A_i$ is run at time $t$, we have $\tau_i(s)=t$ by definition. The next time $\A_i$ is active (that is, at time $\tau_i(s+1)$), the action $I_{\tau_i(s+1)}$ will be played as a random draw from $q^{t+1}_i = q^{\tau_i(s+1)}_i$; that is, the distribution is not modified on the interval $\{\tau_i(s)+1,\ldots,\tau_i(s+1)\}$.

We prove Theorem~\ref{thm:regret} by a series of lemmas. The first one is a direct consequence of an external regret bound for a Follow the Regularized Leader (FTRL) algorithm in terms of local norms \cite{AbeRak09colt}.	For a strictly convex ``regularizer'' $F$, the local norm $\|\cdot\|_x$ is defined by $\|z\|_x = \sqrt{z^\tr \nabla^2 F(x) z}$ and its dual is $\|z\|^*_x = \sqrt{z^\tr \nabla^2 F(x)^{-1} z}$. 

%%%%%%%%%%%%%%%%%%%%%%
\begin{lemma} 
	\label{lem:full_info_applied}
	The full-information algorithm utilized by $\A_i$ has an upper bound 
	$$\E\left\{\sum_{s=1}^{T_i} \tilde{f}^s_i \cdot (q^{\tau_i(s)}_i - e_{\phi(i)}) \right\}\leq \eta \E\left\{ \sum_{s=1}^{T_i} (\|\tilde{f}^s_i\|^*_{x^s})^2 \right\} + \eta^{-1} \log N + T\gamma\bar{\ell}$$
	on its external regret, where $\phi(i)\in N_i$ is any neighbor of $i$, $\bar{\ell} = \max_{i,j} L_{i,j}$, and $\eta$ is a learning rate parameter to be tuned later.
\end{lemma}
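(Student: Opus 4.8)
The plan is to invoke a standard regret bound for Follow-the-Regularized-Leader with the entropic regularizer $F(x) = \sum_j x_j \log x_j$ on the simplex $\Delta_{|N_i|}$, applied to the sequence of linear costs $\tilde f^1_i, \ldots, \tilde f^{T_i}_i$, and then account for the extra $\gamma$-mixing that converts the black-box iterate $x^s$ into the played distribution $q^{\tau_i(s)}_i$. Concretely, I would first recall (citing \cite{AbeRak09colt}) that the FTRL iterate $x^{s+1} = \arg\min_{x\in\Delta}\{\eta\sum_{r\le s}\tilde f^r_i\cdot x + F(x)\}$ satisfies, for any fixed comparator $x^\star$ supported on $N_i$,
\begin{align*}
\sum_{s=1}^{T_i} \tilde f^s_i\cdot(x^s - x^\star) \;\le\; \eta\sum_{s=1}^{T_i}(\|\tilde f^s_i\|^*_{x^s})^2 + \eta^{-1}\big(F(x^\star)-F(x^1)\big),
\end{align*}
where the local-norm form of the stability term is exactly what the entropic regularizer yields (its Hessian is $\nabla^2 F(x) = \mathrm{diag}(1/x_j)$, so $\|z\|^{*2}_x = \sum_j x_j z_j^2$). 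Taking $x^\star = e_{\phi(i)}$ and bounding $F(x^\star) - F(x^1) \le \log|N_i| \le \log N$ gives the first two terms on the right-hand side, valid deterministically for every realization and hence in expectation.

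Next I would pass from $x^s$ to $q^{\tau_i(s)}_i = (1-\gamma)x^s + (\gamma/|N_i|)\bone_{N_i}$. Writing $q^{\tau_i(s)}_i - e_{\phi(i)} = (1-\gamma)(x^s - e_{\phi(i)}) + \gamma\big((1/|N_i|)\bone_{N_i} - e_{\phi(i)}\big)$ and using $1-\gamma\le 1$ on the first piece, it remains to control $\sum_{s=1}^{T_i}\tilde f^s_i\cdot\big((1/|N_i|)\bone_{N_i}-e_{\phi(i)}\big)$. Here I would exploit the fact established in the proof of Lemma~\ref{lem:unbiasedness} that adding any multiple of $\bone_{N_i}$ to $\tilde f^s_i$ does not change inner products against vectors whose coordinates in $N_i$ sum to zero — but $(1/|N_i|)\bone_{N_i}-e_{\phi(i)}$ is exactly such a vector — so the per-step contribution is at most the range of the \emph{centered} cost, which in conditional expectation (again by \eqref{eq:unbiased_b}/Lemma~\ref{lem:unbiasedness}) equals $p^t_i(e_j-e_i)^\tr Le_{j_t}$ summed over the block, i.e. a true loss difference bounded in absolute value by $\bar\ell$ per round $t$ of the block; summing the blocks over all $s\le T_i$ telescopes to at most $T\bar\ell$ rounds total, producing the $T\gamma\bar\ell$ term. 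Finally I would rename $\eta$ appropriately (the lemma statement writes the leading constant as $\eta$ rather than $\eta/2$ or similar, which is just a harmless redefinition of the learning-rate parameter), and take expectations throughout; since $T_i$ is a random stopping time adapted to the filtration, the inequality holds pathwise and survives the expectation without any additional argument.

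The main obstacle I anticipate is the bookkeeping around the random horizon $T_i$ and the mismatch between the black-box iterate $x^s$ and the actually-played $q^{\tau_i(s)}_i$: one must be careful that the FTRL regret bound is a pathwise statement (so randomness of $T_i$ causes no trouble), and that the $\gamma$-exploration perturbation is charged correctly. A secondary subtlety is making sure the local-norm stability term is stated for $x^s$ (the FTRL iterate) and not for $q^{\tau_i(s)}_i$; this is fine because the bound in \cite{AbeRak09colt} is precisely in terms of the regularizer's Hessian at the FTRL iterates, and the subsequent lemmas in the regret analysis will separately bound $(\|\tilde f^s_i\|^*_{x^s})^2$ using the $\gamma$-lower-bounded coordinates of $x^s$. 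Everything else is a routine specialization of known FTRL guarantees.
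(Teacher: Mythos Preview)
Your plan is essentially the paper's: invoke the FTRL/Exponential-Weights local-norm bound from \cite{AbeRak09colt} on the simplex over $N_i$, take $e_{\phi(i)}$ as comparator with $\log|N_i|\le\log N$, and then pay for the $\gamma$-mixing by re-running the Lemma~\ref{lem:unbiasedness} argument. The only cosmetic difference is the split: the paper writes $q^{\tau_i(s)}_i-e_{\phi(i)}=(x^s-e_{\phi(i)})+(q^{\tau_i(s)}_i-x^s)$ and bounds $\E\big[\tilde f^s_i\cdot(q^{\tau_i(s)}_i-x^s)\,\big|\,\F_{\tau_i(s-1)}\big]=\E\big[(q^{\tau_i(s)}_i-x^s)^\tr Le_{j_{\tau_i(s)}}\,\big|\,\F_{\tau_i(s-1)}\big]\le\gamma\bar\ell$ directly via H\"older, whereas you expand $q$ and bound the piece $\gamma\,\tilde f^s_i\cdot\big((1/|N_i|)\bone_{N_i}-e_{\phi(i)}\big)$.

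One clarification on your $\gamma$-term bookkeeping: the expression ``$p^t_i(e_j-e_i)^\tr Le_{j_t}$ summed over the block'' is only the intermediate form (the analogue of \eqref{eq:interm}); the point of the Lemma~\ref{lem:unbiasedness} computation is that the $p^t_i$ factor combines with $\ind{t\le\tau_i(s)}$ to become $\ind{t=\tau_i(s)}$, so in conditional expectation you get a \emph{single} true-loss term $w^\tr Le_{j_{\tau_i(s)}}$ per $s$, not a sum over the block. Bounding that by $\|w\|_1\bar\ell\le 2\bar\ell$ and summing over $s\le T_i\le T$ gives the $O(T\gamma\bar\ell)$ term. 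Either decomposition lands in the same place.
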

\begin{proof}
	Since our decision space is a simplex, it is natural to use the (negative) entropy regularizer, in which case FTRL is the same as the Exponential Weights Algorithm. From \cite[Thm 2.1]{AbeRak09colt}, for any comparator $u$ with zero support outside $|N_i|$, the following regret guarantee holds:
	$$\sum_{s = 1}^{T_i} \tilde{f}^s_i \cdot (x^s-u) \leq \eta \sum_{s=1}^{T_i} (\|\tilde{f}^s_i\|^*_{x^s})^2 + \eta^{-1} \log(|N_i|) \ .$$ 
	An easy calculation shows that in the case of entropy regularizer $F$, the Hessian $\nabla^2 F(x) = \text{diag}(x_1^{-1},x_2^{-1},\ldots,x_N^{-1})$ and $\nabla^2 F(x)^{-1} = \text{diag}(x_1,x_2,\ldots,x_N)$. We refer to \cite{AbeRak09colt} for more details.
	
	Let $\phi:\{1,\ldots,N\}\mapsto\{1,\ldots,N\}$ be a local departure function (see Definition~\ref{def:local_departure}). We can then write a regret guarantee 
	$$\sum_{s=1}^{T_i} \tilde{f}^s_i \cdot (x^s - e_{\phi(i)}) \leq \eta \sum_{s=1}^{T_i} (\|\tilde{f}^s_i\|^*_{x^s})^2 + \eta^{-1} \log(|N_i|) \ .$$
	Since, in fact, we play according to a slightly modified version $q^{\tau_i(s)}_i$ of $x^s$, it holds that
	$$\sum_{s=1}^{T_i} \tilde{f}^s_i \cdot (q^{\tau_i(s)}_i - e_{\phi(i)}) \leq \eta \sum_{s=1}^{T_i} (\|\tilde{f}^s_i\|^*_{x^s})^2 + \eta^{-1} \log(|N_i|) + \sum_{s=1}^{T_i} \tilde{f}^s_i \cdot (q^{\tau_i(s)}_i - x^s) \ .$$ 
	Taking expectations of both sides and upper bounding $|N_i|$ by $N$,
	$$\E\left\{\sum_{s=1}^{T_i} \tilde{f}^s_i \cdot (q^{\tau_i(s)}_i - e_{\phi(i)}) \right\} \leq \eta \E\left\{ \sum_{s=1}^{T_i} (\|\tilde{f}^s_i\|^*_{x^s})^2 \right\} + \eta^{-1} \log N + \E\left\{ \sum_{s=1}^{T_i} \tilde{f}^s_i \cdot (q^{\tau_i(s)}_i - x^s) \right\} \ .$$
	A proof identical to that of Lemma~\ref{lem:unbiasedness} gives
	\begin{align*} 
		\E\left\{ \tilde{f}^{s}_i \cdot (q^{\tau_i(s)}_i-x^s) ~\middle|~ \F_{\tau_i(s-1)}\right\} &= \E\left\{ (q^{\tau_i(s)}_i-x^s)^\tr L e_{j_{\tau_i(s)}} | \F_{\tau_i(s-1)}\right\} \\
		&\leq \E\left\{ \|q^{\tau_i(s)}_i-x^s\|_1 \cdot \| L e_{j_{\tau_i(s)}}\|_\infty ~\middle|~ \F_{\tau_i(s-1)}\right\}  \\
		&\leq \gamma\bar{\ell} 
	\end{align*}
	for the last term, where $\bar{\ell}$ is the upper bound on the magnitude of entries of $L$. Putting everything together, 
	$$\E\left\{\sum_{s=1}^{T_i} \tilde{f}^s_i \cdot (q^{\tau_i(s)}_i - e_{\phi(i)}) \right\} \leq \eta \E\left\{ \sum_{s=1}^{T_i} (\|\tilde{f}^s_i\|^*_{x^s})^2 \right\} + \eta^{-1} \log N + T\gamma\bar{\ell} $$
	where we have upper bounded $T_i$ by $T$. 
\end{proof}

As with many bandit-type problems, effort is required to show that the variance term is controlled. This is the subject of the next lemma.
%%%%%%%%%%%%%%%%%%%%%%%%%%
\begin{lemma} 
	\label{lem:variance}
	The variance term in the bound of Lemma~\ref{lem:full_info_applied} is upper bounded as
	$$\sum_{i=1}^N \E\left\{ \sum_{s=1}^{T_i} (\|\tilde{f}^s_i\|^*_{x^s})^2 \right\} \leq 24\bar{v}^2 N T$$	
\end{lemma}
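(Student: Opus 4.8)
The plan is to reduce $(\|\tilde f^s_i\|^*_{x^s})^2$ to two familiar-looking quantities and bound each. Since $\A_i$ runs Exponential Weights, i.e.\ FTRL with the negative-entropy regularizer, $\nabla^2 F(x)^{-1} = \mathrm{diag}(x_1,\dots,x_N)$, so $(\|\tilde f^s_i\|^*_{x^s})^2 = \sum_{j\in N_i} x^s_j\,(h^s_{(i,j)})^2$. I would split $h^s_{(i,j)} = \sum_{r=\tau_i(s-1)+1}^{\tau_i(s)} b^r_{(i,j)}$ according to the two blocks of $b^r_{(i,j)}$: the top block $\ind{I_r=i}S_i$ contributes at \emph{every} round $r$ of the $s$-th block of $\A_i$ on which $I_r=i$, while the bottom block is nonzero only at $r=\tau_i(s)$ (because $k_r\neq i$ for $r<\tau_i(s)$) and only if $I_{\tau_i(s)}=j$. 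Writing $M^s_i = \sum_{r=\tau_i(s-1)+1}^{\tau_i(s)}\ind{I_r=i}$ and using that $S_i e_{j_r}$, $S_j e_{j_r}$ are standard basis vectors (so each summand of the top block has magnitude at most $\bar v$, and the bottom block has magnitude at most $\bar v/q^{\tau_i(s)}_i(j)$), one gets $|h^s_{(i,j)}| \le \bar v\,M^s_i + \bar v\,\ind{I_{\tau_i(s)}=j}/q^{\tau_i(s)}_i(j)$, hence $(h^s_{(i,j)})^2 \le 2\bar v^2 (M^s_i)^2 + 2\bar v^2\,\ind{I_{\tau_i(s)}=j}/(q^{\tau_i(s)}_i(j))^2$. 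Since $\sum_{j\in N_i}x^s_j=1$, it then suffices to bound $\sum_i \E\{\sum_{s=1}^{T_i}(M^s_i)^2\}$ and $\sum_i \E\{\sum_{s=1}^{T_i}\sum_{j\in N_i} x^s_j\,\ind{I_{\tau_i(s)}=j}/(q^{\tau_i(s)}_i(j))^2\}$ separately.

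For the importance-weighting term I would use the exploration floor $q^{\tau_i(s)}_i(j)\ge(1-\gamma)x^s_j$, giving $x^s_j/(q^{\tau_i(s)}_i(j))^2 \le 1/\big((1-\gamma)q^{\tau_i(s)}_i(j)\big)$, together with the identity $\E[\ind{I_{\tau_i(s)}=j}\mid\F_{\tau_i(s-1)}] = q^{\tau_i(s)}_i(j)$. This last identity is proved exactly as in Lemma~\ref{lem:unbiasedness}: expand over $t$ with $\ind{t=\tau_i(s)}=\ind{k_t=i}\ind{t\le\tau_i(s)}$, note $\{t\le\tau_i(s)\}$ and $q^t_i(j)$ (constant over the block) are $\F_{t-1}$-measurable, and use $\E[\ind{k_t=i}\ind{I_t=j}\mid\F_{t-1}]=p^t_i q^t_i(j)$ with stationarity $\sum_k p^t_k q^t_k(i)=p^t_i$. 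Consequently the conditional expectation of the $s$-th summand is at most $\bar v^2|N_i|/(1-\gamma)\le \bar v^2 N/(1-\gamma)$, and summing over $s\le T_i$ and then $i$, using $\sum_i T_i = T$, bounds this contribution by $O(\bar v^2 NT)$ whenever $\gamma$ is bounded away from $1$ (say $\gamma\le 1/2$).

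The heart of the proof is the \emph{neighborhood-watch} term $\sum_i \E\{\sum_s (M^s_i)^2\}$, and here the flow condition is essential: the $\A_i$-blocks can be long precisely when $p^r_i$ is small, so bounding $M^s_i$ by the block length would be disastrous. The key observation I would exploit is that, with $X_r=\ind{I_r=i}$ and $Y_r=\ind{k_r=i}$, the fixed point $p^t=Q^tp^t$ gives $\E[X_r\mid\F_{r-1}] = \sum_k p^r_k q^r_k(i) = p^r_i = \E[Y_r\mid\F_{r-1}]$ --- action $i$ is \emph{played} and \emph{watched} at the same instantaneous rate. Expanding $(M^s_i)^2=\sum_{r,r'\in B_s}X_rX_{r'}$ and summing over $s$, the diagonal contributes at most the total number of plays of $i$, while a cross term with $r<r'$ survives only if no invocation of $\A_i$ occurs in $[r,r')$. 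Fixing $r$ with $X_r=1$, $Y_r=0$ and conditioning on $\F_r$, the sum of the surviving $X_{r'}$ has conditional mean $\E\big[\sum_{r'=r+1}^{\pi_i(r+1)\wedge T} p^{r'}_i \,\big|\,\F_r\big]$, which by optional stopping applied to the martingale $\sum_{r'}(p^{r'}_i - Y_{r'})$ equals $\E\big[\sum_{r'=r+1}^{\pi_i(r+1)\wedge T} Y_{r'}\,\big|\,\F_r\big]\le 1$. Hence $\E\{\sum_s(M^s_i)^2\} \le 3\,\E[\#\{r\le T: I_r=i\}]$, and summing over $i$ gives $\sum_i\E\{\sum_s(M^s_i)^2\}\le 3T$.

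Putting the two bounds together with $\gamma\le 1/2$ and $T\le NT$ yields $\sum_i \E\{\sum_s(\|\tilde f^s_i\|^*_{x^s})^2\} \le 2\cdot 3\bar v^2 T + 2\cdot \bar v^2 N T/(1-\gamma) + (\text{lower order}) \le 24\bar v^2 NT$ after routine bookkeeping. The one genuinely delicate step is the renewal/optional-stopping estimate for $\E\{\sum_s(M^s_i)^2\}$: it is the only place where the self-referential flow condition $p^t=Q^tp^t$ is used in an essential (rather than bookkeeping) way, guaranteeing that long watch-blocks do not accumulate many plays of $i$. The remaining care is purely measure-theoretic --- handling the stopping times $\tau_i(s)$, $\pi_i(t)$ and the random horizon $T_i$ --- and is done exactly as in the proof of Lemma~\ref{lem:unbiasedness}.
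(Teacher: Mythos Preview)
Your decomposition $\tilde f^s_i = g + h$, the local-norm identity $(\|\cdot\|^*_{x^s})^2 = \sum_j x^s_j(\cdot)^2$, and the treatment of the importance-weighted $h$-term via $x^s_j \le q_i(j)/(1-\gamma)$ match the paper essentially verbatim. The genuine difference is in how you control the ``neighborhood-watch'' piece $\sum_i\E\sum_s (M^s_i)^2$.

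The paper writes $g^t(j)$ recursively as $\mathcal I\cdot[z + g^{\nu(t)+1}(j)]$, where $\mathcal I$ indicates that at the next time either $k=i$ or $I=i$ occurs it is the latter; using $\P(\mathcal I=1\mid\F_{t-1})\le 1/2$ (a direct consequence of the same identity $\P(I_r=i\mid\F_{r-1})=\P(k_r=i\mid\F_{r-1})=p^r_i$ that you invoke), backward induction gives $\E[(g^t(j))^2\mid\F_{t-1}]\le 10\bar v^2$ and hence $\E\|g^{\tau_i(s-1)}\|_2^2\le 10N\bar v^2$ per block. Your route instead expands $(M^s_i)^2$ into diagonal and off-diagonal parts and, for fixed $r$ with $X_r=1,\,Y_r=0$, bounds $\E\big[\sum_{r'=r+1}^{\pi_i(r+1)\wedge T}X_{r'}\mid\F_r\big]$ by swapping $X_{r'}$ for $p^{r'}_i$ (since $\{r'\le\pi_i(r+1)\}\in\F_{r'-1}$) and then $p^{r'}_i$ for $Y_{r'}$ (flow condition), so the inner sum telescopes to $\le 1$. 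This optional-stopping argument is correct and arguably cleaner; it also yields the sharper estimate $\sum_i\E\sum_s(M^s_i)^2\le 3T$, so the $g$-contribution is $O(\bar v^2 T)$ rather than the paper's $O(\bar v^2 NT)$, though of course both fit inside $24\bar v^2 NT$. Either way, the only essential use of $p^t=Q^tp^t$ in this lemma is exactly the identity $\E[X_r\mid\F_{r-1}]=\E[Y_r\mid\F_{r-1}]$, and both proofs pivot on it.
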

\begin{proof}
	%Since we are concentrating on the algorithm $\A_i$, we will sometimes omit $i$ from the notation. 
	First, fix an $i\in[N]$ and consider the term $\E\left\{ \sum_{s=1}^{T_i} (\|\tilde{f}^s_i\|^*_{x^s})^2 \right\}$. Until the last step of the proof, we will sometimes omit $i$ from the notation.
	
	We start by observing that $\tilde{f}^s_i$ is a sum of $\tau(s)-\tau(s-1)-1$ terms of the type $v_{i,j}^\tr S_i e_{j_r}$ (that is, of constant magnitude) and one term of the type $v_{i,j}^\tr S_j e_{j_r} / q^r_i(j)$. In controlling $\|\tilde{f}^s_i\|^*_{x^s}$, we therefore have two difficulties: controlling the number of constant-size terms and making sure the last term does not explode due to division by a small probability $q^r_i(j)$. The former is solved below by a careful argument below, while the latter problem is solved according to usual bandit-style arguments.

More precisely, we can write $\tilde{f}^s_i = g^{\tau_i(s-1)}_{\tau_i(s)}+h^{\tau_i(s)}$ where the vectors $g^{\tau_i(s-1)}_{\tau_i(s)}, h^{\tau_i(s)}\in\reals^N$ are defined as 
$$g^{\tau_i(s-1)}_{\tau_i(s)}(j) \deq g^{\tau_i(s-1)}(j) \deq \sum_{r = \tau_i(s-1)}^{\tau_i(s)-1} \ind{I_r = i} v_{i,j}^\tr  S_i  e_{j_r} \ind{j\in N_i}$$ 
and
$$h^{\tau_i(s)}(j) = \ind{I_{\tau_i(s)}=j} v_{i,I_{\tau_i(s)}}^\tr S_{I_{\tau_i(s)}} e_{j_{\tau_i(s)}}/q^{\tau_i(s)}_i(I_{\tau_i(s)}) \ .$$
Then
$$(\|\tilde{f}^s_i\|^*_{x^s})^2 = (\|g^{\tau_i(s-1)}+h^{\tau_i(s)}\|^*_{x^s})^2 \leq 2(\|g^{\tau_i(s-1)} \|^*_{x^s})^2 + 2(\|h^{\tau_i(s)} \|^*_{x^s})^2 $$
%and thus
%$$\E\left\{ \sum_{s=1}^{T_i} (\|\tilde{f}^s_i\|^*_{x^s})^2 \right\} \leq 2\E\left\{ \sum_{s=1}^{T_i} (\|g^{\tau_i(s-1)} \|^*_{x^s})^2 \right\} + 2\E\left\{ \sum_{s=1}^{T_i}(\|h^{\tau_i(s)} \|^*_{x^s})^2 \right\} \ .$$
We will bound each of the two terms separately, in expectation. For the second term,
$$(\|h^{\tau_i(s)} \|^*_{x^s})^2 = x^s(I_\tau) (v_{i,I_\tau}^\tr S_{I_\tau} e_{j_\tau}/q^{\tau}_i(I_\tau))^2 \leq  x^s(I_\tau) (\bar{v}/q^{\tau}_i(I_\tau))^2$$
where $\tau = \tau_i(s)$. Since $q^{\tau_i(s)}_i = (1-\gamma)x^{s} + (\gamma/|N_i|) \bone_{N_i}$, it is easy to verify that $x^s(I_\tau)/ q^{\tau}_i(I_\tau) \leq 2$ (whenever $\gamma<1/2$) and thus 
$$(\|h^{\tau_i(s)} \|^*_{x^s})^2 \leq  2 \bar{v}^2/q^{\tau}_i(I_\tau) \ .$$
The remaining division by the probability disappears under the expectation:
\begin{align}
	\label{eq:bandit_term_control}
	\E\left\{ (\|h^{\tau_i(s)} \|^*_{x^s})^2 ~\middle|~ \sigma(k_1,I_1,\ldots, k_{\tau_i(s)} )\right\} \leq 2\bar{v}^2 \sum_{j=1}^N  q^{\tau_i(s)}_i(j) /q^{\tau_i(s)}_i(j) = 2N \bar{v}^2 \ .
\end{align}

Consider now the second term. As discussed in the proof of Lemma~\ref{lem:full_info_applied}, the inverse Hessian of the entropy function shrinks each coordinate $i$ precisely by $x^s(i)\leq 1$, implying that the local norm is dominated by the Euclidean norm :
$$\|g^{\tau_i(s-1)}  \|^*_{x^s} \leq \|g^{\tau_i(s-1)} \|_2 .$$
It is therefore enough to upper bound $\E\left\{ \sum_{s=1}^{T_i} \|g^{\tau_i(s)} \|_2^2 \right\}$. The idea of the proof is the following. Observe that $\P(k_{t} = i|\F_{t-1}) = \P(I_t = i|\F_{t-1})$. Conditioned on the event that either $k_t=i$ or $I_t = i$, each of the two possibilities has probability $1/2$ of occurring. Note that $g^{\tau_i(s-1)}$ inflates every time $k_t \neq i$, yet $I_t = i$ occurs. It is then easy to see that magnitude of $g^{\tau_i(s-1)}$ is unlikely to get large before algorithm $\A_i$ is run again. We now make this intuition precise.

The function $g^{t}$ is presently defined only for those time steps when $t=\tau_i(s)$ for some $s$ (that is, when the algorithm $\A_i$ is invoked). We extend this definition as follows. Let the $j$th coordinate of $g^t$ be defined as
$$ g^t_{\pi(t+1)} (j) \deq g^t (j) \deq \sum_{r=t}^{\pi(t+1)-1} \ind{I_r = i} v_{(i,j)} S_i e_{j_r}$$
for $j\in N_i$ and $0$ otherwise. The function $g^t$ can be thought of as accumulating partial pieces on rounds when $I_t=i$ until $k_t=i$ occurs. Let us now define an analogue of $\tau$ and $\pi$ for the event that \emph{either} $I_t=i$ or $k_t = i$:
$$\gamma_i(s) = \min\left\{t~:~s = \sum_{r=1}^t \ind{k_t = i ~\mbox{or}~ I_t = i} \right\}$$ 
%and
%$$\gamma_i^{-1}(t) = \max\{s: \gamma_i(s) < t\}$$
Further, for any $t$, let $$\nu_i(t) = \min\{t'\geq t: k_t = i ~\mbox{or}~ I_t = i \},$$
the next time occurrence of the event $\{k_\tau = i ~\mbox{or}~ I_\tau = i\}$ on or after $t$. Let 
$${\mathcal I} = \ind{ \nu_i(t) \neq \pi_i(t) }$$ 
be the indicator of the event that the first time after $t$ that $\{k_\tau = i ~\mbox{or}~ I_\tau = i\}$ occurred it was also the case that the algorithm was not run (i.e. $k_\tau\neq i$). Note that $g^t(j)$ can now be written recursively as
$$ g^t(j) = {\mathcal I} \cdot \left[ v_{(i,j)} S_i e_{j_{\nu(t)}} + g^{\nu(t) +1}_{\pi(\nu(t)+1)}(j)\right].$$
As argued before, $\P({\mathcal I}=1 | \F_{t-1}) = 1/2$.
We will now show that $\E \left\{ g^t(j) ~\middle|~ \F_{t-1}\right\} \le 2\bar{v}$ by the following inductive argument, whose base case trivially holds for $t=T$:
\begin{align*}
	\E \left\{ g^t(j) ~\middle|~ \F_{t-1}\right\} 
	&= \E \left\{ \E \left\{ {\mathcal I} \cdot \left[ v_{(i,j)} S_i e_{j_{\nu(t)}} + g^{\nu(t) +1}(j)\right] ~\middle|~ \F_{\nu(t)} \right\} ~\middle|~ \F_{t-1} \right\} \\
	&= \E \left\{ {\mathcal I} v_{(i,j)} S_i e_{j_{\nu(t)}} + {\mathcal I} \E \left\{ g^{\nu(t) +1}(j) ~\middle|~ \F_{\nu(t)} \right\} ~\middle|~ \F_{t-1} \right\} \\
	&\leq \bar{v} + \E \left\{ {\mathcal I} g^{\nu(t) +1}(j) ~\middle|~ \F_{t-1} \right\} \\
	&= \bar{v} + \E \Big\{ {\mathcal I}  \underbrace{\E\left[ 
g^{\nu(t) +1}(j) ~\middle|~
\F_{\nu(t)}\right]}_{\hbox{$\le 2\bar{v}$ by induction}} ~\Big|~ \F_{t-1} \Big\} \\
	&\le \bar{v} + \E \left\{ {\mathcal I}  ~\middle|~ \F_{t-1} \right\} 2\bar{v} \\
	&\le \bar{v} + (1/2) 2\bar{v} = 2\bar{v}
\end{align*}
The expected value of $(g^t(j))^2$ can be controlled in a similar manner. To ease the notation, let $z = v_{(i,j)} S_i e_{j_{\nu(t)}}$. Using the upper bound for the conditional expectation of $g^t(j)$ calculated above,
\begin{align*}
	\E \left\{ (g^t(j))^2 ~\middle|~ \F_{t-1}\right\} 
	&= \E \left\{ {\mathcal I}\cdot \left( z^2 + (g^{\nu(t)+1}(j))^2 + 2 z g^{\nu(t)+1}(j) \right) ~\middle|~ \F_{t-1}\right\} \\
	&= \E \left\{ {\mathcal I} z^2 + {\mathcal I} \E\left\{(g^{\nu(t)+1}(j))^2 ~\middle|~ \F_{\nu(t)}\right\} + 2{\mathcal I}z \E\left\{ g^{\nu(t)+1}(j) ~\middle|~ \F_{\nu(t)}\right\} ~\middle|~ \F_{t-1}\right\} \\
	&\leq 5\bar{v}^2 + \E \left\{ {\mathcal I} \E\left\{ (g^{\nu(t)+1}(j))^2 ~\middle|~ \F_{\nu(t)} \right\}~\middle|~ \F_{t-1} \right\} 
\end{align*}
The argument now proceeds with backward induction exactly as above. We conclude that
$$\E \left\{ (g^t(j))^2 ~\middle|~ \F_{t-1}\right\} \leq 10\bar{v}^2$$
and, hence,
$$\E\left\{ \|g^{\tau_i(s-1)} \|_2^2 \right\}\leq 10N\bar{v}^2$$
Together with \eqref{eq:bandit_term_control}, we conclude that
$$\E\left\{ (\|\tilde{f}^s_i\|^*_{x^s})^2 \right\} \leq 2(2N\bar{v}^2 + 10N\bar{v}^2) = 24\bar{v}^2 N.$$	
Summing over $t=1,\ldots,T$ and observing that only one algorithm is run at any time $t$ proves the statement.
%$$\E\left\{ \sum_{s=1}^{T_i} (\|\tilde{f}^s_i\|^*_{x^s})^2 \right\} \leq 2T(2N\bar{v}^2) + 2T(10N\bar{v}^2) = 24\bar{v}^2 N T.$$	
%$$\sum_{i=1}^N \E\left\{ \sum_{s=1}^{T_i} (\|\tilde{f}^s_i\|^*_{x^s})^2 \right\} = \E\left\{ \sum_{t=1}^{T} \sum_{i=1}^N \ind{k_t=i}(\|\tilde{f}^{\tau_i^{-1}(t)}_i\|^*_{x^{\tau_i^{-1}(t)}})^2 \right\} \leq 24\bar{v}^2 NT$$
\end{proof}

\begin{proof}[\textbf{Proof of Theorem~\ref{thm:regret}}]

	The flow condition $p^t = Q^t p^t$ comes in crucially in several places throughout the proofs, and the next argument is one of them. Observe that
	\begin{align*}
		\E \left\{ e_{\phi(I_t)} ~\middle| \F_{t-1}\right\} &= \sum_{k=1}^N \sum_{i=1}^N p^t_k q^t_k(i) e_{\phi(i)} 
		= \sum_{i=1}^N e_{\phi(i)} \sum_{k=1}^N  p^t_k q^t_k(i)  
		= \sum_{i=1}^N e_{\phi(i)} p^t_i 
		= \E \left\{ e_{\phi(k_t)} ~\middle| \F_{t-1}\right\}
	\end{align*}
	and thus
	\begin{align*}
		%\E \left\{ \sum_{t=1}^T (e_{I_t}-e_{\phi(I_t)})^\tr L e_{j_t} \right\} &=
		\E \left\{ \sum_{t=1}^T e_{\phi(I_t)}^\tr L e_{j_t} \right\} 
		&= \E \left\{ \sum_{t=1}^T \E \left\{ e_{\phi(I_t)} ~\middle| \F_{t-1}\right\}^\tr L e_{j_t} \right\} \\
		&= \E \left\{ \sum_{t=1}^T \E \left\{ e_{\phi(k_t)} ~\middle| \F_{t-1}\right\}^\tr L e_{j_t} \right\} \\
		&= \E \left\{ \sum_{t=1}^T e_{\phi(k_t)}^\tr L e_{j_t} \right\} 
		%&= \E \left\{ \sum_{t=1}^T (e_{I_t}-e_{\phi(k_t)}^\tr L e_{j_t} \right\} \\
	\end{align*}
	It is because of this equality that external regret with respect to the local neighborhood can be turned into local internal regret. We have that 
\begin{align} 
	\label{eq:total_regret_bound}
	\E \left\{ \sum_{t=1}^T (e_{I_t}-e_{\phi(I_t)})^\tr L e_{j_t} \right\} &= \E \left\{ \sum_{t=1}^T (e_{I_t}-e_{\phi(k_t)})^\tr L e_{j_t} \right\} \nonumber \\
	&= \E \left\{ \sum_{t=1}^T (q^t_{k_t}-e_{\phi(k_t)})^\tr L e_{j_t} \right\} \nonumber \\
	&= \sum_{i=1}^N \E \left\{ \sum_{t=1}^T \ind{k_t=i} (q^t_{i}-e_{\phi(i)})^\tr L e_{j_t} \right\} \nonumber 
\end{align}
By Lemma \ref{lem:unbiasedness},
	$$\E\left\{ (q^{\tau_i(s)}_i-e_{\phi(i)})^\tr L e_{j_{\tau_i(s)}} | \F_{\tau_i(s-1)}\right\} = \E\left\{ \tilde{f}^{s}_i \cdot (q^{\tau_i(s)}_i-e_{\phi(i)}) ~\middle|~ \F_{\tau_i(s-1)}\right\}$$
and so by Lemma~\ref{lem:full_info_applied}
\begin{align*}
	E \left\{ \sum_{t=1}^T (e_{I_t}-e_{\phi(I_t)})^\tr L e_{j_t} \right\}	
	&= \sum_{i=1}^N \E \left\{ \sum_{s=1}^{T_i} \tilde{f}^{s}_i \cdot (q^{\tau_i(s)}_i-e_{\phi(i)})  \right\} \nonumber \\
	&\leq \eta\sum_{i=1}^N \E\left\{ \sum_{s=1}^{T_i} (\|\tilde{f}^s_i\|^*_{x^s})^2 \right\} + N(\eta^{-1}\log N + T\gamma\bar{\ell} ) 
\end{align*}
With the help of Lemma~\ref{lem:variance},
\begin{align*} 
	\E \left\{ \sum_{t=1}^T (e_{I_t}-e_{\phi(I_t)})^\tr L e_{j_t} \right\} &\leq \eta 24\bar{v}^2 N T  + N(\eta^{-1}\log N + T\gamma\bar{\ell} ) 
	= 4N\bar{v}\sqrt{6 (\log N) T} + TN\gamma\bar{\ell}
\end{align*}
for the setting of $\eta = \sqrt{\frac{\log N}{24\bar{v}^2 T}}$.

We remark that for the purposes of ``in expectation'' bounds, we can simply set $\gamma=0$ and still get $O(\sqrt{T})$ guarantees (see \cite{AbeRak09colt}). This point is obscured by the fact that the original algorithm of Auer et al \cite{auer2003nonstochastic} uses the same parameter for the learning rate $\eta$ and exploration $\gamma$. If these are separated, the ``in expectation'' analysis of \cite{auer2003nonstochastic} can be also done with $\gamma=0$. However, to prove high probability bounds on regret, a setting of $\gamma\propto T^{-1/2}$ is required. Using the techniques in \cite{AbeRak09colt}, the high-probability extension of results in this paper is straightforward (tails for the terms $\|g^{\tau_i(s-1)} \|_2^2$ in Lemma~\ref{lem:variance} can be controlled without much difficulty).
\end{proof}

\section{Random Signals}
\label{sec:randomsignals}

We now briefly consider the setting of partial monitoring with random signals, studied by Rustichini \cite{Rustichini99}, Lugosi, Mannor, and Stoltz \cite{LugManSto08}, and Perchet \cite{Perchet11}. Without much modification of the above arguments, the local observability condition yet again yields $O(\sqrt{T})$ internal regret. 

Suppose that instead of receiving deterministic feedback $H_{i,j}$, the decision maker now receives a random signal $d_{i,j}$ drawn according to the distribution $H_{i,j}\in \Delta(\Sigma)$ over the signals. %Again, without loss of generality, we can assume that any distinct actions $i,i'$ yield disjoint sets of possible signals, which means disjoint support of the distributions for distinct rows. 
In the problem of deterministic feedback studied in the paper so far, the signal $H_{i,j}=\sigma$ was identified with the Dirac distribution $\delta_{\sigma}$. 

Given the matrix $H$ of distributions on $\Sigma$, we can construct, for each row $i$, a matrix $\Xi_i \in \reals^{s_i \times M}$ as 
$$\Xi_i (k,j) \deq H_{i,j}(\sigma_k)$$
where the set $\sigma_1,\ldots,\sigma_{s_i}$ is the union of supports of $H_{i,1},\ldots,H_{i,M}$. Columns of $\Xi_i$ are now distributions over signals. Given the actions $I_t$ and $j_t$ of the player and the opponent, the feedback provided to the player can be equivalently written as $S^t_{I_t} e_{j_t}$ where each column $r$ of the random matrix $S^t_{I_t}\in \reals^{s_i \times M}$ is a standard unit vector drawn independently according to the distribution given by the column $r$ of $\Xi_{i}$. Hence, $\E S^t_i = \Xi_i$.

As before, the matrix $\Xi_{(i,j)}$ is constructed by stacking $\Xi_i$ on top of $\Xi_j$. The local observability condition, adapted to the case of random signals, can now be stated as:
$$\ell_i-\ell_j \in \text{Im}~\Xi^\tr_{(i,j)}$$
for all neighboring actions $i,j$.

Let us specify the few places where the analysis slightly differs from the arguments of the paper. Since we now have an extra (independent) source of randomness, we define $\F_t$ to be the $\sigma$-algebra generated by the random variables $\{k_1,I_1,S^1 \ldots,k_t,I_t,S^t\}$
where $S^t$ is the random matrix obtained by stacking all $S^t_i$. We now define the estimates 
$$ b^r_{(i,j)} \deq v_{i,j}^\tr \left[ \begin{array}{c}
 \ind{I_r = i} S^t_i \\
 \ind{k_r=i}\ind{I_r = j} S^t_j/q^r_i(j)
\end{array} \right] e_{j_r}  ~, ~~~~~~ \forall r \in \{\tau_i(s-1)+1,\ldots,\tau_i(s)\}, ~\forall j\in N_i $$
with the only modification that $S^t_i$ and $S^t_j$ are now random variables. Equation~\eqref{eq:unbiased_b} now reads
\begin{align}
	\E \left[ b^t_{(i,j)} | \F_{t-1} \right] &= \sum_{k=1}^N p^t_k q^t_k(i) \cdot v_{i,j}^\tr \left[ \begin{array}{c}
 \Xi_i  \nonumber \\
 0 
\end{array} \right] e_{j_t} + p^t_i q^t_i(j) \cdot v_{i,j}^\tr \left[ \begin{array}{c}
 0 \\
 \Xi_j/q^t_i(j))
\end{array} \right] e_{j_t} \nonumber  \\
&= p^t_i v_{i,j}^\tr \Xi_{(i,j)} e_{j_t} \nonumber  \\
&= p^t_i (e_j-e_i)^\tr L e_{j_t} \ .
\end{align}
The rest of the analysis follows as in Section~\ref{sec:analysis}, with $\Xi$ in place of $S$.

\section*{Acknowledgements}
We thank Vianney Perchet and Gilles Stoltz for their helpful comments on the first draft of this paper.

\bibliographystyle{plain}
\bibliography{partial_monitoring}

\begin{thebibliography}{10}

\bibitem{AbeRak09colt}
J.~Abernethy and A.~Rakhlin.
\newblock Beating the adaptive bandit with high probability.
\newblock In {\em COLT}, 2009.

\bibitem{auer2003nonstochastic}
P.~Auer, N.~Cesa-Bianchi, Y.~Freund, and R.E. Schapire.
\newblock The nonstochastic multiarmed bandit problem.
\newblock {\em SIAM Journal on Computing}, 32(1):48--77, 2003.

\bibitem{BarPalSze10}
G.~Bart{\'o}k, D.~P{\'a}l, and C.~Szepesv{\'a}ri.
\newblock Toward a classification of finite partial-monitoring games.
\newblock In {\em Algorithmic Learning Theory}, pages 224--238. Springer, 2010.

\bibitem{BarPalSze11}
G.~Bart{\'o}k, D.~P{\'a}l, and C.~Szepesv{\'a}ri.
\newblock Minimax regret of finite partial-monitoring games in stochastic
  environments.
\newblock In {\em Conference on Learning Theory}, 2011.

\bibitem{BluMan07}
A.~Blum and Y.~Mansour.
\newblock From external to internal regret.
\newblock {\em Journal of Machine Learning Research}, 8(1307-1324):3--8, 2007.

\bibitem{CesLugSto06}
N.~Cesa-Bianchi, G.~Lugosi, and G.~Stoltz.
\newblock Regret minimization under partial monitoring.
\newblock {\em Mathematics of Operations Research}, 31(3):562--580, 2006.

\bibitem{FosVoh97}
D.P. Foster and R.V. Vohra.
\newblock Calibrated learning and correlated equilibrium.
\newblock {\em Games and Economic Behavior}, 21(1-2):40--55, 1997.

\bibitem{LugManSto08}
G.~Lugosi, S.~Mannor, and G.~Stoltz.
\newblock Strategies for prediction under imperfect monitoring.
\newblock {\em Math. Oper. Res}, 33:513--528, 2008.

\bibitem{Perchet11}
V.~Perchet.
\newblock Internal regret with partial monitoring: Calibration-based optimal
  algorithms.
\newblock {\em Journal of Machine Learning Research}, 12:1893--1921, 2011.

\bibitem{PicSch01}
A.~Piccolboni and C.~Schindelhauer.
\newblock Discrete prediction games with arbitrary feedback and loss.
\newblock In {\em Computational Learning Theory}, pages 208--223. Springer,
  2001.

\bibitem{Rustichini99}
A.~Rustichini.
\newblock Minimizing regret: The general case.
\newblock {\em Games and Economic Behavior}, 29(1-2):224--243, 1999.

\end{thebibliography}
\end{document}